\newtheorem{remark} {Remark}
\newtheorem{theorem} {Theorem}
\newtheorem{lemma} {Lemma}
\newtheorem{definition} {Definition}
\newtheorem{assumption} {Assumption}
\newtheorem{proposition} {Proposition}
\def\x{{\mathbf{x}}}
\def\z{{\mathbf{z}}}
\def\y{{\mathbf{y}}}
\def\X{{\mathbf{X}}}
\def\Y{{\mathbf{Y}}}
\def\A{{\mathbf{A}}}
\def\M{{\mathbf{M}}}
\def\N{{\mathbf{N}}}
\def\I{{\mathbf{I}}}
\def\V{{\mathbf{V}}}
\def\Z{{\mathbf{Z}}}
\def\W{{\mathbf{W}}}
\def\U{{\mathbf{U}}}
\def\Q{{\mathbf{Q}}}
\def\P{{\mathbf{P}}}
\def\L{{\mathbf{L}}}
\def\S{{\mathbf{S}}}
\def\nucnorm{{\textrm{nuc}}}
\newcommand{\vecspace}{\textbf{E}}
\DeclareMathOperator*{\argmin}{arg\,min}
\newcommand{\mY}{\mathcal{Y}}
\newcommand{\mA}{\mathcal{A}}
\newcommand{\mK}{\mathcal{K}}
\newcommand{\mX}{\mathcal{X}}
\newcommand{\mR}{\mathcal{R}}
\newcommand{\trace}{\textrm{Tr}}
\newcommand{\rank}{\textrm{rank}}
\newcommand{\reals}{\mathbb{R}}
\newcommand{\exreals}{\left(-\infty , +\infty\right]} %Extended real line
\newcommand{\norm}[1]{\left\Vert {#1} \right\Vert} %Norm
\newcommand{\act}[1]{\left\langle {#1} \right\rangle} %The value of 1
\newcommand{\dom}[1]{\mathrm{dom}\,{#1}} %Domain
\newcommand{\erl}{\left(-\infty , +\infty\right]} %Extended real line
\title{Improved Complexities of Conditional Gradient-Type Methods with Applications to Robust Matrix Recovery Problems}
\author{Dan Garber \\ {\small \texttt{dangar@technion.ac.il}}
\and
Atara Kaplan \\  {\small \texttt{ataragold@technion.ac.il}}
\and
Shoham Sabach  \\ {\small \texttt{ssabach@technion.ac.il}} \\ \\
Technion - Israel Institute of Technology
}
\date{}
\begin{document}

\maketitle

\begin{abstract}
Motivated by robust matrix recovery problems such as Robust Principal Component Analysis, we consider a general optimization problem of minimizing a smooth and strongly convex loss function applied to the sum of two blocks of variables, where each block of variables is constrained or regularized individually. We study a Conditional Gradient-Type method which is able to leverage the special structure of the problem to obtain faster convergence rates than those attainable via standard methods, under a variety of assumptions. In particular, our method is appealing for matrix problems in which one of the blocks corresponds to a low-rank matrix since it avoids prohibitive full-rank singular value decompositions required by most standard methods. While our initial motivation comes from problems which originated in statistics, our analysis does not impose any statistical assumptions on the data.
\end{abstract}

\section{Introduction}

In this paper we consider the following general convex optimization problem
\begin{equation} \label{GeneralModel}
	\min \left\{ f\left(\X , \Y\right) := g\left(\X + \Y\right) + \mR_{\mX}\left(\X\right) + \mR_{\mY}\left(\Y\right) : \, \X , \Y \in \vecspace \right\},
\end{equation}
where $\vecspace$ is a finite-dimensional normed vector space over the reals, $g : \vecspace \rightarrow \reals$ is assumed to be continuously differentiable and strongly convex, while $\mR_{\mX} : \vecspace \rightarrow \exreals$ and $\mR_{\mY} : \vecspace \rightarrow \exreals$ are proper, lower semicontinuous and convex functions which can be thought of either as regularization functions, or indicator functions\footnote{An indicator function of a set is defined to be $0$ in the set and $+\infty$ outside.} of certain closed and convex feasible sets $\mX$ and $\mY$.
\medskip

Problem \eqref{GeneralModel} captures several important problems of interest, perhaps the most well-studied is that of \textit{Robust Principal Component Analysis (PCA)} \cite{Candes11,wright2009robust,mu2016scalable}, in which the goal is to (approximately) decompose an $m\times n$ input matrix $\M$ into the sum of a low-rank matrix $\X$ and a sparse matrix $\Y$. The underlying optimization problem for Robust PCA can be written as (see for instance \cite{mu2016scalable})
\begin{equation} \label{eq:robustPCA}
	\min \left\{ \frac{1}{2}\norm{\X + \Y - \M}_{F}^{2} : \, \norm{\X}_{\nucnorm} \leq \tau, \norm{\Y}_{1} \leq s, \, \X , \Y \in \reals^{m \times n} \right\},
\end{equation}
where $\norm{\cdot}_{F}$ denotes the Frobenius norm, $\norm{\cdot}_{\textrm{nuc}}$ denotes the nuclear norm, i.e., the sum of singular values, which is a highly popular convex surrogate for low-rank penalty, and $\norm{\cdot}_{1}$ is the entry-wise $\ell_{1}$-norm, which is a well-known convex surrogate for entry-wise sparsity.
\medskip

Other variants of interest of Problem \eqref{eq:robustPCA} are when the data matrix $\M$ is a corrupted covariance matrix, in which case it is reasonable to further constrain $\X$ to be positive semidefinite, i.e., use the constraints $\X \succeq \mathbf{0}$ and $\trace(\X) \leq \tau$. In the case that $\M$ is assumed to have several fully corrupted rows or columns, a popular alternative to the $\ell_{1}$-norm regularizer on the variable $\Y$ is to use either the norm $\norm{\cdot}_{1,2}$ (sum of $\ell_{2}$-norm of rows) in case of corrupted rows, or the norm $\norm{\cdot}_{2,1}$ (sum of $\ell_{2}$-norm of columns) in case of corrupted columns, as a regularizer/constraint \cite{xu2010robust}. Finally, moving beyond Robust PCA, a different choice of interest for the loss $g\left(\cdot\right)$ could be $g\left(\Z\right) := \left(1/2\right)\norm{\mA\Z - \M}_{F}^{2}$, where $\mA$ is a linear sensing operator such that $\mA^{T}\mA$ is positive definite (so $g\left(\cdot\right)$ is strongly convex). 
\medskip

In this paper we present an algorithm and analyses that build on the special structure of Problem \eqref{GeneralModel}, which improve upon state-of-the-art complexity bounds, under several different assumptions. A common key to all of our results is the ability to exploit the strong convexity of $g(\cdot)$ to obtain improved complexity bounds. Here it should be noted that while $g\left(\cdot\right)$ is assumed to be strongly convex, Problem \eqref{GeneralModel} is in general not strongly convex in $\left(\X , \Y\right)$. This can already be observed when choosing $g\left(\z\right) := \frac{1}{2}\norm{\z}_{2}^{2}$, and $\mR_{\mX}\left(\cdot\right) = \mR_{\mY}\left(\cdot\right) = 0$, where $\x , \y \in \reals^{d}$. In this case, denoting the overall objective as $f\left(\x , \y\right) := \frac{1}{2}\norm{\x + \y}_{2}^{2}$, it is easily observed that the Hessian matrix of $f\left(\cdot , \cdot\right)$ is given by $\nabla^{2} f\left(\x , \y\right) = \left(\I~~\I\right)^{\top}\left(\I~~\I\right)$, and hence is not full-rank. 
\medskip

The fastest known convergence rate for first-order methods applicable to Problem \eqref{GeneralModel}, is achievable by accelerated gradient methods such as Nesterov's optimal method \cite{Nesterov13} and FISTA \cite{FISTA}, which converge at a rate of $O(1/t^{2})$. However, in the context of low-rank matrix optimization problems such as Robust PCA, these methods require to compute a full-rank singular value decomposition on each iteration to update the low-rank component, which is often prohibitive for large scale instances. A different type of first-order methods is the Conditional Gradient (CG) Method (a.k.a Frank-Wolfe algorithm) and variants of \cite{GH15,GH16,Gidel2018FrankWolfeSV,pmlr-v54-huang17a,Jaggi13b,pmlr-v80-yurtsever18a}. In the context of low-rank matrix optimization, the CG method simply requires to compute an approximate leading singular vector pair of the negative gradient at each iteration, i.e., a rank-one SVD. Hence, in this case, the CG method is much more scalable, than projection/proximal based methods. However, the rate of convergence is slower, e.g., if both $\mR_{\mX}\left(\cdot\right)$ and $\mR_{\mY}\left(\cdot\right)$ are indicator functions of certain closed and convex sets $\mX$ and $\mY$, then the convergence rate of the conditional gradient method is of the form $O((D_{\mX}^{2} + D_{\mY}^{2})/t)$, where $D_{\mX}$ and $D_{\mY}$ denote the Euclidean diameter of the corresponding feasible sets $\mX$ and $\mY$, where the diameter of a subset $C$ of $\reals^{d}$ is defined by $D_{C} = \max_{\x_{1} , \x_{2} \in C} \norm{\x_{1} - \x_{2}}_{2}$.
\medskip

Recently, two variants of the conditional gradient method for low-rank matrix optimization were suggested, which enjoy faster convergence rates when the optimal solution has low rank (which is indeed a key implicit assumption in such problems), while requiring to compute only a single low-rank SVD on each iteration \cite{Garber16a,allen2017linear}. However, both of these new methods require the objective function to be strongly convex, which as we discussed above, does not hold in our case. Nevertheless, both our algorithm and our analysis are inspired by these two works. In particular, we generalize the low-rank SVD approach of \cite{allen2017linear} to non-strongly-convex problems of the form of Problem \eqref{GeneralModel}, which include arbitrary regularizers or constraints. 
\medskip

In another recent related work \cite{mu2016scalable}, which also serves as a motivation for this current work, the authors considered a variant of the conditional gradient method tailored for low-rank and robust matrix recovery problems such as Problem \eqref{eq:robustPCA}, which combines standard conditional gradient updates of the low-rank variable (i.e., rank-one SVD) and proximal gradient updates for the sparse noisy component. However, both the worst-case convergence rate and running time do not improve over the standard conditional gradient method. Combining conditional-gradient and proximal-gradient updates for low-rank models was also considered in \cite{NIPS2015_5979} for solving a convex optimization problem related to temporal recommendation systems.
\medskip

Finally, it should be noted that while developing efficient \textit{non-convex optimization}-based algorithms for Robust PCA with provable guarantees is an active subject (see e.g., \cite{netrapalli2014non,yi2016fast}), such works fall short in two aspects: (a) they are not flexible as the general model \eqref{GeneralModel}, which allows for instance to impose a PSD constraint on the low-rank component or to consider various sparsity-promoting regularizers for the sparse component $\Y$, and (b) all provable guarantees are heavily based on assumptions on the input matrix $\M$ (such as incoherence of  the singular value decomposition of the low-rank component or assuming certain patterns of the sparse component), which can be quite limiting in practice. This work, on the other hand, is completely free of such assumptions.
\medskip

To overcome the shortcomings of previous methods applicable to Problem \eqref{GeneralModel}, in this paper we present a first-order method, which combines two well-known ideas, for tackling Problem \eqref{GeneralModel}. In particular we show that under several assumptions of interest, despite the fact that the objective in Problem \eqref{GeneralModel} is in general not strongly convex, it is possible to leverage the strong convexity of $g\left(\cdot\right)$ towards obtaining better complexity results, while applying update steps that are scalable to large scale problems. Informally speaking, our main improved complexity bounds are as follows:
\begin{enumerate}
	\item In the case that both $\mR_{\mX}\left(\cdot\right)$ and $\mR_{\mY}\left(\cdot\right)$ are indicators of compact and convex sets (as in Problem \eqref{eq:robustPCA}), we obtain convergence rate of $O(\min\{D_{\mX}^{2} , D_{\mY}^{2}\}/t)$. In particular when $\X$ is constrained, for example, via a low-rank promoting constraint, such as the nuclear-norm, our method requires on each iteration only a SVD computation of rank=$\rank(\X^{\ast})$, where $\X^{\ast}$ is part of certain optimal solution $\left(\X^{\ast} , \Y^{\ast}\right)$. This result improves (in terms of running time), in a wide regime of parameters, mainly when $\min\{ D_{\mX}^{2} , D_{\mY}^{2} \} << \max\{ D_{\mX}^{2} , D_{\mY}^{2} \}$, over the conditional gradient method which converges with rate $O(\max\{D_{\mX}^{2} , D_{\mY}^{2}\}/t)$, and over accelerated gradient methods which require, in the context of low-rank matrix optimization problems, a full-rank SVD computation on each iteration.
	\medskip
	\item In the case that $\mR_{\mY}\left(\cdot\right)$ is an indicator of a strongly convex set (e.g., an $\ell_{p}$-norm ball for $p \in \left(1 , 2\right]$), our method achieves a fast convergence rate of $O(1/t^2)$. As in the previous case, if $\X$ is constrained/regularized via the nuclear norm, then our method only requires a SVD computation of rank=$\rank(\X^{\ast})$. To the best of our knowledge, this is the first result that combines an $O(1/t^{2})$ convergence rate and low-rank SVD computations in this setting. In particular, in the context of Robust PCA, such a result allows us to replace a traditional sparsity-promoting constraint of the form $\norm{\Y}_{1} \leq \tau$ with $\norm{\Y}_{1 + \delta} \leq \tau'$, for some small constant $\delta$. Using the $\ell_{1 + \delta}$-norm instead of the $\ell_{1}$-norm gives rise to a strongly convex feasible set and, as we demonstrate empirically in Section \ref{sec:robustPCAexp}, may provide a satisfactory approximation to the $\ell_{1}$-norm constraint in terms of sparsity. 
	\medskip
	\item In the case that either $\mR_{\mX}\left(\cdot\right)$ or $\mR_{\mY}\left(\cdot\right)$ are strongly convex (though not necessarily differentiable), our method achieves a linear convergence rate. In fact, we show that even if only one of the variables is regularized by a strongly convex function, then the entire objective of Problem \eqref{GeneralModel} becomes strongly convex in $\left(\X , \Y\right)$. Here also, in the case of a nuclear norm constraint/regularization on one of the variables, we are able to leverage the use of only low-rank SVD computations. In the context of Robust PCA such a natural strongly convex regularizer may arise by replacing the $\ell_{1}$-norm regularization on $\Y$ with the elastic net regularizer, which combines both the $\ell_{1}$-norm and the squared $\ell_{2}$-norm, and serves as a popular alternative to the $\ell_{1}$-norm regularizer in LASSO.
\end{enumerate}
A quick summary of the above results in the context of Robust PCA problems, such as Problem \eqref{eq:robustPCA}, is given in Table \ref{table:results}. See Section \ref{sec:robustPCAexp} in the sequel for a detailed discussion.

\begin{table*}\renewcommand{\arraystretch}{1.3}
{\small
\begin{center}{\footnotesize
  \begin{tabular}{| l | c | c | c | c | c | c |} \hline
    & \multicolumn{2}{|c|}{Cond. Grad.\cite{Jaggi13b}}& \multicolumn{2}{|c|}{FISTA \cite{FISTA}} &\multicolumn{2}{|c|}{Algorithm \ref{alg:GCG}} \\ \hline
    setting &  rate & SVD &  rate & SVD & rate & SVD \\ 
    & & rank & & rank & & rank \\\hline 
    $\tau >> s$ (``high SNR regime")& $\tau^{2}/t$ & $1$ & $\tau^{2}/t^{2}$ & $n$ & $s^{2}/t$ & $\rank(\X^{\ast})$ \\ \hline
    $\tau << s$ (``low SNR regime") & $s^{2}/t$ & $1$ & $s^{2}/t^{2}$ & $n$ & $\tau^{2}/t$ & $1$ \\ \hline
    $\mY := \{ \Y : \norm{\Y}_{1 + \delta} \leq s \}$ & $\frac{\max\{s , \tau\}^{2}}{t}$ & $1$ & $\frac{\max\{s , \tau\}^{2}}{t^{2}}$ & $n$ & $\frac{s^{2}n^{2\frac{1 - \delta}{1 + \delta}}}{t^{2}}$ & $\rank(\X^{\ast})$ \\ \hline
    $\mR_{\mY}(\Y) = \lambda_{1}\norm{\Y}_{1} + \lambda_{2}\norm{\Y}_{F}^{2}$ & $1/t$ & $1$ & $e^{-\Theta(\sqrt{\lambda_{2}}t)}$ & $n$ & $e^{-\Theta(\lambda_{2}t)}$ & $\rank(\X^{\ast})$ \\ \hline
  \end{tabular}
  \caption{Comparison of convergence rates and iteration complexity bounds for Robust PCA problems (see Problem \eqref{eq:robustPCA}) with a $n \times n$ input matrix $\M$. For all methods the computational bottleneck is a single SVD computation to update the variable $\X$, hence we focus on the rank of the required SVD. For clarity of presentation the results are given in simplified form. The dependence on $n$ in the rate for Algorithm \ref{alg:GCG} in the third row comes from the strong convexity parameter of the set $\mY$.}
  \label{table:results}}
\end{center}}
\vskip -0.2in
\end{table*}\renewcommand{\arraystretch}{1}

\section{Preliminaries}
Throughout the paper we let $\vecspace$ denote an arbitrary finite-dimensional normed vector space over $\reals$ where $\norm{\cdot}$ and $\norm{\cdot}_{\ast}$ denote the primal and dual norms over $\vecspace$, respectively.

\subsection{Smoothness and strong convexity of functions and sets}

\begin{definition}[smooth function] \label{def:smoothfunc}
	Let $f : \vecspace \rightarrow \reals$ be a continuously differentiable function over a convex set $\mK \subseteq \vecspace$. We say that $f$ is $\beta$-smooth over $\mK$ with respect to $\norm{\cdot}$, if for all $\x , \y \in \mK$ it holds that $f\left(\y\right) \leq f\left(\x\right) + \act{\y - \x , \nabla f\left(\x\right)} + \left(\beta/2\right)\norm{\x - \y}^{2}$.
\end{definition}

\begin{definition}[strongly convex function]\label{def:strongconvexfunc}
	Let $f : \vecspace \rightarrow \reals$ be a continuously differentiable function over a convex set $\mK \subseteq \vecspace$. We say that $f$ is $\alpha$-strongly convex over $\mK$ with respect to $\norm{\cdot}$, if it satisfies for all $\x , \y \in \mK$ that $f\left(\y\right) \geq f\left(\x\right) + \act{\y - \x , \nabla f\left(\x\right)} + \left(\alpha/2\right)\norm{\x - \y}^{2}$.
\end{definition}
The above definition combined with the first-order optimality condition implies that for a continuously differentiable and $\alpha$-strongly convex function $f$, if $\x^{\ast} = \arg\min_{\x \in \mK} f\left(\x\right)$, then for any $\x \in \mK$ it holds that $f\left(\x\right) - f\left(\x^{\ast}\right) \geq \left(\alpha/2\right)\norm{\x - \x^{\ast}}^{2}$.
\medskip

This last inequality further implies that the magnitude of the gradient of $f$ at point $\x$, $\norm{\nabla f\left(\x\right)}_{\ast}$ is at least of the order of the square-root of the objective value approximation error at $\x$, that is, $f\left(\x\right) - f\left(\x^{\ast}\right)$. Indeed, this follows since
\begin{align*}
	\sqrt{\frac{2}{\alpha}\left(f\left(\x\right) - f\left(\x^{\ast}\right)\right)} \cdot \norm{\nabla f\left(\x\right)}_{\ast} & \geq \norm{\x - \x^{\ast}} \cdot \norm{\nabla f\left(\x\right)}_{\ast} \\ 
	& \geq \act{\x - \x^{\ast} , \nabla f\left(\x\right)} \\
 	& \geq f\left(\x\right) - f\left(\x^{\ast}\right),
\end{align*}
where the second inequality follows from Holder's inequality and the third from the convexity of $f$. Thus, at any point $\x \in \mK$, it holds that
\begin{equation} \label{ie:largegrad}
	\norm{\nabla f\left(\x\right)}_{\ast} \geq \sqrt{\frac{\alpha}{2}} \cdot \sqrt{f\left(\x\right) - f\left(\x^{\ast}\right)}.
\end{equation}
\begin{definition}[strongly convex set] \label{def:strongconvexset}
	We say that a convex set $\mK \subset \vecspace$ is $\alpha$-strongly convex with respect to $\norm{\cdot}$ if for any $\x , \y \in \mK$, any $\gamma \in \left[0 , 1\right]$ and any vector $\z \in\vecspace$ such that $\norm{\z} = 1$, it holds that $\gamma\x + \left(1 - \gamma\right)\y + \gamma\left(1 - \gamma\right)\left(\alpha/2\right)\norm{\x - \y}^{2}\z \in \mK$. That is, $\mK$ contains a ball of radius $\gamma\left(1 - \gamma\right)\left(\alpha/2\right)\norm{\x - \y}^{2}$ induced by the norm $\norm{\cdot}$ centered at $\gamma\x + \left(1 - \gamma\right)\y$.
\end{definition}
For more details on strongly convex sets, examples and connections to optimization, we refer the reader to \cite{GH15}.

\section{Algorithm and Results}
As discussed in the introduction, in this paper we study efficient algorithms for the minimization model \eqref{GeneralModel}, where, throughout the paper, our blanket assumption is as follows
\begin{assumption} \label{Assumption1}
\begin{itemize}
\item $g : \vecspace \rightarrow \reals$ is $\beta$-smooth and $\alpha$-strongly convex.
\item $\mR_{\mX} : \vecspace \rightarrow \erl$ and $\mR_{\mY}  : \vecspace \rightarrow \erl$ are proper, lower semicontinuous and convex functions.
\end{itemize}
\end{assumption}
It should be noted that since $\mR_{\mX}\left(\cdot\right)$ (similarly for $\mR_{\mY}\left(\cdot\right)$) is assumed to be extended-valued function, it allows the inclusion of constraint through the indicator function of the corresponding constraint set. Indeed, in this case one will consider $\mR_{\mX}\left(\X\right) := \iota_{\mX}\left(\X\right)$, where $\mX \subset \vecspace$ is a nonempty, closed and convex.
\medskip

We now present the main algorithmic framework, which will be used to derive all of our results.

\begin{algorithm}
	\caption{Alternating Conditional Gradient Proximal Gradient Method}
	\begin{algorithmic}[1] \label{alg:GCG}
		\STATE input: $\left\{ \eta_{t} \right\}_{t \geq 1} \subset \left[0 , 1\right]$ - sequence of step-sizes.
		\STATE $\X_{1}$ is an arbitrary point in $\dom{\mR_{\mX}}$, $\Y_{1}$ is an arbitrary point in $\dom{\mR_{\mY}}$.
		\FOR{$t = 1 , 2 , \ldots$}
		\STATE $\W_{t} = \argmin\limits_{\W \in \vecspace} \left\{ \mR_{\mY}\left(\W\right) + \act{\W , \nabla g\left(\X_{t} + \Y_{t}\right)} \right\}$,
		\STATE $\V_{t} = \argmin\limits_{\V \in \vecspace} \left\{ \phi_{t}\left(\V\right) := \mR_{\mX}\left(\V\right) + \act{\V , \nabla g\left(\X_{t} + \Y_{t}\right)} + \frac{\eta_{t}\beta}{2}\norm{\V + \W_{t} - \left(\X_{t} + \Y_{t}\right)}^{2} \right\}$, \\
		\COMMENT{in fact it suffices that $\phi_{t}\left(\V_{t}\right) \leq \phi_{t}\left(\X^{\ast}\right)$ for some optimal solution $\left(\X^{\ast} , \Y^{\ast}\right)$}
		\STATE $\left(\X_{t  + 1} , \Y_{t + 1}\right) = \left(1 - \eta_{t}\right)\left(\X_{t} , \Y_{t}\right) + \eta_{t}\left(\V_{t} , \W_{t}\right)$,
		\ENDFOR
	\end{algorithmic}
\end{algorithm}

Algorithm \ref{alg:GCG} is based on three well-known corner stones in continuous optimization: alternating minimization, conditional gradient, and proximal gradient. Since Problem \eqref{GeneralModel} involves two variables $\X$ and $\Y$, we update each one of them separately and differently in an alternating fashion. Indeed, the $\Y$ variable is first updated using a conditional gradient step (see step (4)) and then the alternating idea comes into a play and we use the updated information in order to update the $\X$ variable using a proximal gradient step (see step (5))\footnote{We note that a practical implementation of Algorithm \ref{alg:GCG} for a specific problem, such as Problem \eqref{eq:robustPCA}, may require to account for approximation errors in the computation of $\W_{t}$ or $\V_{t}$, since exact computation is not always practically feasible. Such considerations which can be easily incorporated both into Algorithm \ref{alg:GCG} and our corresponding analyses (see examples in \cite{Jaggi13b,Garber16a,allen2017linear}), are beyond the scope of this current paper, and for the simplicity and clarity of presentation, we assume all such computations are precise.}.

\subsection{Outline of the main results} \label{sec:results}
Let us denote by $f^{\ast}$ the optimal value of the optimization Problem \eqref{GeneralModel}. In the sequel we prove the following three theorems on the performance of Algorithm \ref{alg:GCG}. For clarity, below we present a concise and simplified version of the results. In section \ref{sec:analysis}, in which we provide complete proofs for these theorems, we also restate them with complete detail. In all three theorems we assume that Assumption \ref{Assumption1} holds true, and we bound the convergence rate of the sequence $\left\{ \left(\X_{t} , \Y_{t} \right)\right\}_{t \geq 1}$ produced by Algorithm \ref{alg:GCG} with a suitable choice of step-sizes $\left\{ \eta_{t} \right\}_{t \geq 1}$.
\begin{theorem} \label{thm:minDiam}
	Assume that $\mR_{\mY} := \iota_{\mY}$ where $\mY$ is a nonempty, closed and convex subset of $\vecspace$. There exists a choice of step-sizes such that Algorithm \ref{alg:GCG} converges with rate $O\left(\beta D_{\mY}^{2}/t\right)$.
\end{theorem}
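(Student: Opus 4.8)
The plan is to run the standard Frank--Wolfe potential argument on the optimality gap $h_t := f(\X_t,\Y_t) - f^\ast$, establishing a one-step recursion of the form $h_{t+1} \le (1-\eta_t)h_t + \tfrac{\eta_t^2\beta}{2}C\,D_\mY^2$ for a universal constant $C$, and then solving it by induction with the classical step-size $\eta_t = 2/(t+1)$, which yields $h_t = O(\beta D_\mY^2/t)$. Write $\Z_t := \X_t + \Y_t$, $\g_t := \nabla g(\Z_t)$, and fix an optimal solution $(\X^\ast,\Y^\ast)$ with $\Z^\ast := \X^\ast + \Y^\ast$. Feasibility of the iterates (the $\W$-step returns a point of $\mY$ and the final update is a convex combination) guarantees $\Y_t \in \mY$, hence $\iota_\mY(\Y_t)=0$ and $f(\X_t,\Y_t)=g(\Z_t)+\mR_\mX(\X_t)$ for all $t$.

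First I would derive the per-iteration descent. Since $\Z_{t+1}-\Z_t = \eta_t(\V_t + \W_t - \Z_t)$, the $\beta$-smoothness of $g$ gives $g(\Z_{t+1}) \le g(\Z_t) + \eta_t\act{\V_t+\W_t-\Z_t,\g_t} + \tfrac{\eta_t^2\beta}{2}\norm{\V_t+\W_t-\Z_t}^2$, while convexity of $\mR_\mX$ along the segment $\X_{t+1}=(1-\eta_t)\X_t+\eta_t\V_t$ gives $\mR_\mX(\X_{t+1}) - \mR_\mX(\X_t) \le \eta_t(\mR_\mX(\V_t)-\mR_\mX(\X_t))$. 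Adding these and regrouping the $\V_t$-terms into $\phi_t(\V_t)$ produces $h_{t+1}-h_t \le \eta_t\phi_t(\V_t) + \eta_t\act{\W_t-\Z_t,\g_t} - \eta_t\mR_\mX(\X_t)$. Now I inject the optimum through the two optimality properties of the sub-steps: the $\W$-step is a linear minimization over $\mY$, so $\act{\W_t,\g_t} \le \act{\Y^\ast,\g_t}$, and the $\V$-step minimizes $\phi_t$, so $\phi_t(\V_t) \le \phi_t(\X^\ast)$. Substituting $\phi_t(\X^\ast)$, combining the linear terms into $\act{\X^\ast+\W_t-\Z_t,\g_t}$, using $\act{\W_t,\g_t}\le\act{\Y^\ast,\g_t}$ and then convexity of $g$ (so that $\act{\Z^\ast-\Z_t,\g_t}\le g(\Z^\ast)-g(\Z_t)$), together with $g(\Z^\ast)+\mR_\mX(\X^\ast)=f^\ast$, collapses everything to $h_{t+1} \le (1-\eta_t)h_t + \tfrac{\eta_t^2\beta}{2}\norm{\X^\ast+\W_t-\Z_t}^2$.

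The hard part will be to show that the curvature term $\norm{\X^\ast+\W_t-\Z_t}^2 = \norm{(\X^\ast-\X_t)+(\W_t-\Y_t)}^2$ is governed by $D_\mY^2$ alone and does not pick up the diameter of $\dom{\mR_\mX}$; this is exactly what separates the claimed $O(\beta D_\mY^2/t)$ rate from the naive Frank--Wolfe bound $O(\beta(D_\mX+D_\mY)^2/t)$. The piece $\norm{\W_t-\Y_t}\le D_\mY$ is harmless, so the whole difficulty is the $\X$-displacement $\X^\ast-\X_t$. The mechanism that tames it is that $\X$ is updated by a proximal, rather than a linear-minimization, step: the penalty $\tfrac{\eta_t\beta}{2}\norm{\V+\W_t-\Z_t}^2$ forces the $\X$-update to compensate the $\Y$-move, so the effective displacement of $\Z_t$ stays tied to the move inside $\mY$. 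Concretely I would exploit that $\phi_t$ is $\eta_t\beta$-strongly convex, which upon comparison with $\X^\ast$ furnishes an extra negative term $-\tfrac{\eta_t^2\beta}{2}\norm{\X^\ast-\V_t}^2$; combined with the evolution of $\norm{\X_t-\X^\ast}^2$ (from $\X_{t+1}=(1-\eta_t)\X_t+\eta_t\V_t$ and convexity of $\norm{\cdot}^2$) this lets the $\X$-displacement be absorbed into a joint potential, leaving only a multiple of $D_\mY^2$ as the genuine driving term. The conceptual reason this must work is that Algorithm \ref{alg:GCG} is an inexact Frank--Wolfe method for the partial-minimization objective $\psi(\Y):=\min_{\X}\{g(\X+\Y)+\mR_\mX(\X)\}$, which is convex and, by the usual suboptimal-point estimate using $\beta$-smoothness of $g$, again $\beta$-smooth, and whose minimum over $\mY$ equals $f^\ast$; the proximal step keeps $\X_t$ near the inner minimizer, so $\g_t$ approximates $\nabla\psi(\Y_t)$ and $\W_t$ is essentially the correct Frank--Wolfe atom, whence only $D_\mY$ enters the rate. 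I expect essentially all of the real work to reside in this curvature estimate.

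Finally, with the recursion $h_{t+1}\le(1-\eta_t)h_t + \tfrac{\eta_t^2\beta}{2}C D_\mY^2$ in hand, I would close by the standard induction: setting $M := C\beta D_\mY^2$ and $\eta_t=2/(t+1)$, the claim $h_t \le 2M/(t+1)$ propagates because $h_{t+1}\le \tfrac{t-1}{t+1}\cdot\tfrac{2M}{t+1}+\tfrac{2M}{(t+1)^2}=\tfrac{2Mt}{(t+1)^2}\le\tfrac{2M}{t+2}$, the last step using $t(t+2)\le(t+1)^2$; after verifying the base case this gives $h_t=O(\beta D_\mY^2/t)$. The descent inequality of the second paragraph and this terminal induction are routine; the entire difficulty is concentrated in decoupling the curvature term from $D_\mX$.
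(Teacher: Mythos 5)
Your setup, the descent inequality, and the terminal induction are all consistent with the paper's argument up to the point where the quadratic error term $\frac{\eta_t^2\beta}{2}\norm{\X^{\ast}+\W_t-\Z_t}^2$ appears, and you correctly identify that term as the entire difficulty. But the mechanism you propose for taming it is not the one that works, and the ingredient you never invoke --- the \emph{strong convexity} of $g$ --- is precisely the one that closes the argument. The paper's resolution is short: write $\X^{\ast}+\W_t-\Z_t=(\Z^{\ast}-\Z_t)-(\Y^{\ast}-\W_t)$ and apply $(a+b)^2\le 2a^2+2b^2$ to get $\norm{\X^{\ast}+\W_t-\Z_t}^2\le 2\norm{\Z_t-\Z^{\ast}}^2+2\norm{\W_t-\Y^{\ast}}^2$; the second term is at most $2D_{\mY}^2$ since $\W_t,\Y^{\ast}\in\mY$, and the first is absorbed by upgrading your step ``convexity of $g$'' to strong convexity, i.e.\ $g(\Z_t)+\act{\Z^{\ast}-\Z_t,\nabla g(\Z_t)}\le g(\Z^{\ast})-\frac{\alpha}{2}\norm{\Z_t-\Z^{\ast}}^2$, which leaves a net coefficient $-\eta_t(\frac{\alpha}{2}-\eta_t\beta)$ on $\norm{\Z_t-\Z^{\ast}}^2$. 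This is nonpositive only for $\eta_t\le\alpha/(2\beta)$, so your classical choice $\eta_t=2/(t+1)$ is inadmissible (it starts at $1>\alpha/(2\beta)$); the paper instead runs a burn-in phase with constant step $\alpha/(2\beta)$ for $t_0=O((\beta/\alpha)\log(\cdot))$ iterations and only then switches to $\eta_t=2/(t-t_0+4\beta/\alpha)$. Note that the resulting rate genuinely depends on $\alpha$ through $t_0$ and the step-size cap, so no argument using only plain convexity of $g$ can reproduce the theorem along these lines.

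Your two candidate mechanisms for the curvature estimate are both left undeveloped and neither is needed. The $\eta_t\beta$-strong convexity of $\phi_t$ does yield an extra $-\frac{\eta_t^2\beta}{2}\norm{\X^{\ast}-\V_t}^2$, but you would then have to build a Lyapunov function coupling $h_t$ with $\norm{\X_t-\X^{\ast}}^2$ and show it contracts; this is not done, and it is unclear it can be made to produce a bound depending on $D_{\mY}$ alone without again passing through the strong convexity of $g$ in the $\Z$ variable. The partial-minimization picture $\psi(\Y)=\min_{\X}\{g(\X+\Y)+\mR_{\mX}(\X)\}$ is appealing heuristically, but it requires $\X_t$ to track the inner minimizer so that $\nabla g(\Z_t)$ approximates $\nabla\psi(\Y_t)$, and no such tracking estimate is established (a single proximal step with a vanishing step-size parameter $\eta_t\beta$ does not obviously provide one). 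As written, the proof has a genuine gap exactly where you predicted the real work would be.
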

\begin{remark}
	Note that since $\X$ and $\Y$ are in principle interchangeable, Theorem \ref{thm:minDiam} implies a rate of $O(\beta\min\{ D_{\mX}^{2} , D_{\mY}^{2} \}/t)$. This improves over the rate of $O(\beta\max\{ D_{\mX}^{2} , D_{\mY}^{2} \}/t)$ achieved by standard analyses of projected/proximal gradient methods and the conditional gradient method.
\end{remark}
\begin{theorem} \label{T:StrongSet}
	Assume $\mR_{\mX} := \iota_{\mX}$ where $\mX$ is a nonempty, closed and convex subset of $\vecspace$ and $\mR_{\mY} := \iota_{\mY}$, where $\mY$ is a strongly convex and closed subset of $\vecspace$. There exists a choice of step-sizes such that Algorithm \ref{alg:GCG} converges with rate $O(1/t^{2})$. Moreover, if there exists $G > 0$ such that $\min_{ \X \in \mX , \Y \in \mY} \norm{\nabla g\left(\X + \Y\right)}_{\ast} \geq G$, then using a fixed step-size, Algorithm \ref{alg:GCG} converges with rate  $O(\exp(-\Theta(t)))$.
\end{theorem}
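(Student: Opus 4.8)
The plan is to reduce Problem \eqref{GeneralModel} under these assumptions to the minimization of the strongly convex $g$ over the feasible iterates, and to derive a single ``master'' recursion for the optimality gap $h_t := f(\X_t,\Y_t) - f^{\ast}$. First I would note that, since $\mR_{\mX} = \iota_{\mX}$ and $\mR_{\mY} = \iota_{\mY}$, the linear-minimization step (4) returns $\W_t \in \mY$ and the proximal step (5) returns $\V_t \in \mX$; as $\eta_t \in [0,1]$ and $\mX,\mY$ are convex, every iterate stays feasible, so $f(\X_t,\Y_t) = g(\Z_t)$ with $\Z_t := \X_t + \Y_t$. Using $\beta$-smoothness of $g$ along $\Z_{t+1}-\Z_t = \eta_t(\V_t + \W_t - \Z_t)$, then the defining inequality $\phi_t(\V_t) \le \phi_t(\X^{\ast})$ of step (5) (which lets me replace $\V_t$ by $\X^{\ast}$ in both the linear and the quadratic term), and finally the optimality of $\W_t$ for linear minimization over $\mY$, I get
\begin{equation*}
 g(\Z_{t+1}) \le g(\Z_t) + \eta_t\act{\X^{\ast} + \W_t - \Z_t , \nabla g(\Z_t)} + \frac{\eta_t^2\beta}{2}\norm{\X^{\ast} + \W_t - \Z_t}^2 .
\end{equation*}

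Next I would inject the strong convexity of the set $\mY$. Writing $\Z^{\ast} := \X^{\ast} + \Y^{\ast}$ and splitting $\X^{\ast} + \W_t - \Z_t = (\Z^{\ast} - \Z_t) + (\W_t - \Y^{\ast})$, convexity of $g$ bounds $\act{\Z^{\ast}-\Z_t,\nabla g(\Z_t)} \le -h_t$, while the characterization of linear minimizers over $\alpha_{\mY}$-strongly convex sets implied by Definition \ref{def:strongconvexset} gives $\act{\W_t - \Y^{\ast},\nabla g(\Z_t)} \le -\tfrac{\alpha_{\mY}}{2}\norm{\nabla g(\Z_t)}_{\ast}\norm{\W_t-\Y^{\ast}}^2$. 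For the quadratic term I would use $\norm{\X^{\ast}+\W_t-\Z_t}^2 \le 2\norm{\Z^{\ast}-\Z_t}^2 + 2\norm{\W_t-\Y^{\ast}}^2$ together with the consequence of $\alpha$-strong convexity of $g$ that $\norm{\Z^{\ast}-\Z_t}^2 \le \tfrac{2}{\alpha}h_t$. Collecting terms yields the recursion
\begin{equation*}
 h_{t+1} \le \Bigl(1 - \eta_t + \tfrac{2\eta_t^2\beta}{\alpha}\Bigr)h_t + \eta_t\Bigl(\eta_t\beta - \tfrac{\alpha_{\mY}}{2}\norm{\nabla g(\Z_t)}_{\ast}\Bigr)\norm{\W_t-\Y^{\ast}}^2 .
\end{equation*}
The point is that the strongly convex set produced a negative quadratic term of exactly the shape needed to absorb the positive $\tfrac{\eta_t^2\beta}{2}\norm{\cdot}^2$ coming from smoothness.

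The step-size is then chosen to make the second bracket nonpositive. For the linear rate I would use the uniform lower bound $\norm{\nabla g(\Z_t)}_{\ast} \ge G$ (valid since all iterates are feasible) and a fixed $\eta := \min\{\alpha/(4\beta),\,\alpha_{\mY}G/(2\beta)\}$; this kills the quadratic term and, via the cap $\alpha/(4\beta)$, forces $1-\eta+\tfrac{2\eta^2\beta}{\alpha} \le 1-\eta/2$, so $h_{t+1}\le(1-\eta/2)h_t$ and $h_t = O(\exp(-\Theta(t)))$. For the $O(1/t^2)$ rate, where no uniform $G$ is available, I would instead take the adaptive $\eta_t := \min\{\alpha/(4\beta),\,\alpha_{\mY}\norm{\nabla g(\Z_t)}_{\ast}/(2\beta)\}$ (computable from the gradient already formed by the algorithm); the same cap gives $1-\eta_t+\tfrac{2\eta_t^2\beta}{\alpha}\le 1-\eta_t/2$ and the choice makes the quadratic coefficient nonpositive, leaving $h_{t+1}\le(1-\eta_t/2)h_t$. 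When the cap is active the decrease is geometric; otherwise $\eta_t = \alpha_{\mY}\norm{\nabla g(\Z_t)}_{\ast}/(2\beta)$, and here I would invoke inequality \eqref{ie:largegrad}, $\norm{\nabla g(\Z_t)}_{\ast}\ge\sqrt{\alpha/2}\,\sqrt{h_t}$, to obtain $\eta_t \ge c\sqrt{h_t}$ with $c := \tfrac{\alpha_{\mY}}{2\beta}\sqrt{\alpha/2}$, hence $h_{t+1}\le h_t - \tfrac{c}{2}h_t^{3/2}$. A standard telescoping of $1/\sqrt{h_t}$ (using $1/\sqrt{1-x}\ge 1+x/2$) then gives $1/\sqrt{h_t}\ge \Omega(ct)$, i.e. $h_t = O(1/(c^2t^2))$.

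The main obstacle is the $O(1/t^2)$ part, specifically the coupling between the step-size and the gradient magnitude: cancelling the two quadratic terms forces $\eta_t$ to scale with $\norm{\nabla g(\Z_t)}_{\ast}$, which vanishes precisely when the optimum is interior, so one must convert a gradient-dependent step into a clean rate. The bridge is \eqref{ie:largegrad}, which lower-bounds the gradient by $\sqrt{h_t}$ and produces the super-linear recursion $h_{t+1}\le h_t-\Omega(h_t^{3/2})$ whose solution is $O(1/t^2)$. A secondary point to handle carefully is the saturated regime (gradient large), where $\eta_t$ hits the cap $\alpha/(4\beta)$ and the decrease is in fact geometric; this only helps, but the two regimes must be combined into a single $O(1/t^2)$ statement. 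By contrast, once the master recursion and the set-strong-convexity cancellation are in place, the linear-rate claim under the uniform bound $G$ is essentially immediate.
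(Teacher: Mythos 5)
Your argument is correct and rests on the same core mechanism as the paper's proof: the linear-optimization step over the strongly convex set $\mY$ produces a negative term of order $\norm{\nabla g\left(\Z_{t}\right)}_{\ast}\norm{\W_{t} - \Y^{\ast}}^{2}$ (your constant $\alpha_{\mY}/2$, obtained by letting $\gamma \to 1$ in Definition \ref{def:strongconvexset}, is in fact sharper than the paper's $\gamma/8$, which is derived via the midpoint $\gamma = 1/2$ and discards half of $\act{\W_{t} - \Y^{\ast}, \nabla g\left(\Z_{t}\right)} \leq 0$), this absorbs the $\eta_{t}^{2}\beta\norm{\W_{t}-\Y^{\ast}}^{2}$ term from smoothness, and inequality \eqref{ie:largegrad} converts the gradient magnitude into $\sqrt{h_{t}}$, yielding in both proofs a contraction $h_{t+1} \leq \left(1 - \Theta(\eta_{t})\right)h_{t}$ valid whenever $\eta_{t} \lesssim \min\{\sqrt{h_{t}} , 1\}$. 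Where you genuinely diverge is in how this recursion is turned into a rate: the paper keeps an \emph{oblivious} step-size $\eta_{t} = \Theta(1/t)$ and proves $h_{t} = O(1/t^{2})$ by the two-case induction of Lemma \ref{lem:strongset:recurs}, whereas you choose the \emph{adaptive} step $\eta_{t} = \min\{\alpha/(4\beta) , \alpha_{\mY}\norm{\nabla g\left(\Z_{t}\right)}_{\ast}/(2\beta)\}$ and telescope $1/\sqrt{h_{t}}$ on the resulting $h_{t+1} \leq h_{t} - \Omega(h_{t}^{3/2})$. Your route is arguably more transparent, but it buys the rate at the cost of a step-size that depends on the dual norm of the current gradient (and requires splicing the initial geometric phase, where the cap is active, into the final $O(1/t^{2})$ constant), while the paper's choice is a fixed, trajectory-independent schedule. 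Two cosmetic remarks: you bound $\act{\Z^{\ast} - \Z_{t} , \nabla g\left(\Z_{t}\right)} \leq -h_{t}$ by plain convexity and handle $\norm{\Z_{t} - \Z^{\ast}}^{2} \leq (2/\alpha)h_{t}$ via quadratic growth (which requires noting that $\Z^{\ast}$ minimizes $g$ over the Minkowski sum $\mX + \mY$, a convex set containing every $\Z_{t}$), whereas the paper uses strong convexity in the inner product directly to cancel $\eta_{t}^{2}\beta\norm{\Z_{t} - \Z^{\ast}}^{2}$; both are fine and give the same constraint $\eta_{t} = O(\alpha/\beta)$.
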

\begin{remark} 
	While a rate of $O(1/t^{2})$ for the conditional gradient method over strongly convex sets was recently showed to hold in \cite{GH15}, it should be noted that it does not apply in the case of Theorem \ref{T:StrongSet}, since only the set $\mY$ is assumed to be strongly convex. In particular, both the set of sums $\mX + \mY \subset \vecspace$ and the product set $\mX \times \mY \subset \vecspace \times \vecspace$ need not be strongly convex.
\end{remark}
\begin{theorem} \label{T:StrongReg}
	Assume that $\mR_{\mY}(\cdot)$ is strongly convex. Then, there exists a fixed step-size such that Algorithm \ref{alg:GCG} converges with rate $O(\exp(-\Theta(t)))$.
\end{theorem}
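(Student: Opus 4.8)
The plan is to leverage the observation, announced in the introduction, that strong convexity of a single regularizer promotes the \emph{entire} objective to a strongly convex function, and then to run a Frank--Wolfe/proximal descent analysis in which the strong-convexity gain from the $\Y$-subproblem exactly neutralizes the error introduced by the conditional-gradient step. First I would establish, as a preliminary lemma, that if $\mR_{\mY}$ is $\mu$-strongly convex (in the subgradient sense, since $\mR_{\mY}$ need not be differentiable) and $g$ is $\alpha$-strongly convex, then $f$ is jointly $\sigma$-strongly convex on $\vecspace\times\vecspace$ with the product norm $\norm{(\X,\Y)}^2=\norm{\X}^2+\norm{\Y}^2$ (taking $\norm{\cdot}$ to be inner-product induced, as in the Frobenius/Robust-PCA case). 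Summing the strong-convexity inequality for $g$ in the variable $\X+\Y$, the strong-convexity inequality for $\mR_{\mY}$ in $\Y$, and the plain convexity inequality for $\mR_{\mX}$, the first-order terms reassemble into a subgradient of $f$, and the quadratic remainder is $\alpha\norm{\Delta_{\X}+\Delta_{\Y}}^2+\mu\norm{\Delta_{\Y}}^2$ in the increments $\Delta_{\X},\Delta_{\Y}$. This quadratic form dominates $\sigma(\norm{\Delta_{\X}}^2+\norm{\Delta_{\Y}}^2)$ with $\sigma=\tfrac12\bigl(2\alpha+\mu-\sqrt{4\alpha^2+\mu^2}\bigr)>0$ (the smaller eigenvalue of the $2\times2$ form with diagonal $\alpha,\alpha+\mu$ and off-diagonal $-\alpha$). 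Via the consequence of Definition~\ref{def:strongconvexfunc}, this yields the distance bound $\norm{(\X_t+\Y_t)-(\X^\ast+\Y^\ast)}^2\le 2\norm{\X_t-\X^\ast}^2+2\norm{\Y_t-\Y^\ast}^2\le(4/\sigma)h_t$, where $h_t:=f(\X_t,\Y_t)-f^\ast$.

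Next I would derive a one-step inequality. Applying $\beta$-smoothness of $g$ along $(\X_{t+1}+\Y_{t+1})-(\X_t+\Y_t)=\eta_t\bigl((\V_t+\W_t)-(\X_t+\Y_t)\bigr)$ and convexity of $\mR_{\mX},\mR_{\mY}$ along the update segments gives
\[
h_{t+1}\le h_t-\eta_t G_t+\tfrac{\eta_t^2\beta}{2}\norm{(\V_t+\W_t)-(\X_t+\Y_t)}^2,
\]
with $G_t:=\act{\nabla g(\X_t+\Y_t),(\X_t+\Y_t)-(\V_t+\W_t)}+\mR_{\mX}(\X_t)+\mR_{\mY}(\Y_t)-\mR_{\mX}(\V_t)-\mR_{\mY}(\W_t)$. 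I would then lower-bound $G_t$ using the two subproblems. The minimality of $\W_t$ for the strongly convex map $\mR_{\mY}(\cdot)+\act{\cdot,\nabla g(\X_t+\Y_t)}$, evaluated against $\Y^\ast$, contributes the crucial bonus $(\mu/2)\norm{\Y^\ast-\W_t}^2$; the condition $\phi_t(\V_t)\le\phi_t(\X^\ast)$ from step~(5) contributes the matching $\X$-block inequality; and convexity of $g$ turns the linear gap against $(\X^\ast,\Y^\ast)$ into $h_t$. The result is
\[
G_t\ge h_t+\tfrac{\mu}{2}\norm{\Y^\ast-\W_t}^2+\tfrac{\eta_t\beta}{2}\norm{\V_t+\W_t-(\X_t+\Y_t)}^2-\tfrac{\eta_t\beta}{2}\norm{\X^\ast+\W_t-(\X_t+\Y_t)}^2.
\]

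The key point is that the term $\tfrac{\eta_t\beta}{2}\norm{\V_t+\W_t-(\X_t+\Y_t)}^2$ emerging from the $\V_t$-subproblem carries exactly the coefficient of the smoothness term in the descent inequality, with opposite sign, so after substituting the bound on $G_t$ these two terms cancel identically --- this is precisely the role of the proximal quadratic built into step~(5). What survives is $h_{t+1}\le(1-\eta_t)h_t-\eta_t\tfrac{\mu}{2}\norm{\Y^\ast-\W_t}^2+\tfrac{\eta_t^2\beta}{2}\norm{\X^\ast+\W_t-(\X_t+\Y_t)}^2$. Splitting $\X^\ast+\W_t-(\X_t+\Y_t)=\bigl((\X^\ast+\Y^\ast)-(\X_t+\Y_t)\bigr)+(\W_t-\Y^\ast)$ and using $\norm{a+b}^2\le2\norm{a}^2+2\norm{b}^2$, the $\norm{\W_t-\Y^\ast}^2$ contributions combine into $\eta_t(\eta_t\beta-\tfrac{\mu}{2})\norm{\W_t-\Y^\ast}^2$, which is nonpositive once $\eta_t\le\mu/(2\beta)$. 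This is exactly where the step-size restriction is forced, and it is what lets the conditional-gradient vertex $\W_t$ --- whose distance from $\Y_t$ is \emph{not} controlled by $h_t$ --- be absorbed by the strong-convexity bonus.

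Finally, the remaining error $\eta_t^2\beta\norm{(\X^\ast+\Y^\ast)-(\X_t+\Y_t)}^2\le(4\beta/\sigma)\eta_t^2h_t$ by the distance bound from the first step gives $h_{t+1}\le\bigl(1-\eta_t+\tfrac{4\beta}{\sigma}\eta_t^2\bigr)h_t$. Choosing the fixed step-size $\eta_t\equiv\eta=\min\{\mu/(2\beta),\,\sigma/(8\beta)\}$, which lies in $(0,1)$ because $\sigma\le\alpha\le\beta$, yields $h_{t+1}\le(1-\eta/2)h_t$ and hence $h_t=O(\exp(-\Theta(t)))$. I expect the main obstacle to be exactly the handling of the conditional-gradient step on $\Y$: because that step is a linear-minimization-type update rather than a proximal one, it produces a point $\W_t$ far from the current iterate, and a linear rate can only be rescued by the simultaneous use of the exact cancellation of the two $\norm{\V_t+\W_t-(\X_t+\Y_t)}^2$ terms and the $(\mu/2)\norm{\Y^\ast-\W_t}^2$ gain from strong convexity of $\mR_{\mY}$, at the cost of the restriction $\eta\le\mu/(2\beta)$; getting the bookkeeping of these quadratic terms to line up is the delicate part.
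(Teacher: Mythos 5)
Your proof is correct, and its skeleton coincides with the paper's: the one-step smoothness inequality, the exact cancellation of the two $\frac{\eta_{t}^{2}\beta}{2}\norm{\V_{t} + \W_{t} - \Z_{t}}^{2}$ terms produced by the proximal subproblem defining $\V_{t}$, and the $\frac{\mu}{2}\norm{\W_{t} - \Y^{\ast}}^{2}$ bonus extracted from the strongly convex $\W_{t}$-subproblem, which absorbs the conditional-gradient error once $\eta_{t} \leq \mu/(2\beta)$, are all exactly the paper's steps (the first two are packaged there as Proposition \ref{P:Tech}). The one genuine difference is how the residual $\eta_{t}^{2}\beta\norm{\Z_{t} - \Z^{\ast}}^{2}$ is controlled. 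The paper never uses the joint strong convexity of $f$ in its rate proof --- its Proposition \ref{P:fStrongConv} is stated only as motivation; instead it applies the strong convexity of $g$ to the linear term, $\act{\Z^{\ast} - \Z_{t} , \nabla g\left(\Z_{t}\right)} \leq g\left(\Z^{\ast}\right) - g\left(\Z_{t}\right) - \frac{\alpha}{2}\norm{\Z_{t} - \Z^{\ast}}^{2}$, so the residual is dominated outright whenever $\eta_{t} \leq \alpha/(2\beta)$, yielding the step-size $\min\{\alpha , \delta\}/(2\beta)$ and the rate $\exp(-\min\{\alpha , \delta\}(t-1)/(2\beta))$. You instead use only plain convexity of $g$ in that term and then invoke the joint $\sigma$-strong convexity of $f$ (your preliminary lemma is precisely the paper's Proposition \ref{P:fStrongConv}, with $\sigma = (2\alpha + \mu - \sqrt{4\alpha^{2} + \mu^{2}})/2$) to bound $\norm{\Z_{t} - \Z^{\ast}}^{2} \leq (4/\sigma)h_{t}$ and fold the error into the contraction factor. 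Both routes are valid, but the paper's is leaner: it gives a better constant (note $\sigma \leq \min\{\alpha , \mu/2\}$, so your $\min\{\mu/(2\beta) , \sigma/(8\beta)\}$ with contraction $1 - \eta/2$ is strictly worse) and, more importantly, does not force the norm to be Euclidean, whereas your detour through the joint strong-convexity lemma does, since that lemma is only established for inner-product-induced norms. A minor point worth tightening in your write-up: for a nondifferentiable $\mR_{\mY}$ the "strong-convexity inequality reassembling into a subgradient of $f$" should be replaced by the Jensen-type (midpoint) definition of strong convexity, as the paper does, to avoid existence-of-subgradient caveats at boundary points of $\dom{\mR_{\mY}}$.
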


\subsection{Putting our results in the context of Robust PCA problems} \label{sec:robustPCAexp}
As discussed in the Introduction, this work is mostly motivated by low-rank matrix optimization problems such as Robust PCA (see Problem \eqref{eq:robustPCA}). Thus, towards better understanding of our results for this setting, we now briefly detail the applications to Problem \eqref{eq:robustPCA}. As often standard in such problems, we assume that there exists an optimal solution $\left(\X^{\ast} , \Y^{\ast}\right)$ such that the signal matrix $\X^{\ast}$ is of rank at most $r^{\ast}$, where $r^{\ast} << \min\{ m , n \}$\footnote{Our results could be easily extended to the case in which $\left(\X^{\ast} , \Y^{\ast}\right)$ is nearly of rank $r^{\ast}$, i.e., of distance much smaller than the required approximation accuracy $\varepsilon$ to a rank $r^{\ast}$ matrix, however for the sake of clarity we simply assume that $\left(\X^{\ast} , \Y^{\ast}\right)$ is of low-rank.}.  

\subsubsection{Using low-rank SVD computations}\label{sec:lowranksvd}
Note that the computation of $\V_{t}$ in Algorithm \ref{alg:GCG}, which is used to update the estimate $\X_{t + 1}$, simply requires that $\V_{t}$ satisfies $\norm{\V_{t}}_{\nucnorm} \leq \tau$ and
\begin{equation} \label{eq:robPCAexp:1}
	\norm{\V_{t} - \left(\X_{t} + \Y_{t} - \W_{t} - \frac{1}{\eta_{t}}\nabla_{t}\right)}_{F}^{2} \leq \norm{\X^{\ast} - \left(\X_{t} + \Y_{t} - \W_{t} - \frac{1}{\eta_{t}}\nabla_{t}\right)}_{F}^{2},
\end{equation}
where we use the short notation $\nabla_{t} := \nabla g\left(\X_{t} + \Y_{t}\right)$. Since $\X^{\ast}$ is assumed to have rank at most $r^{\ast}$, it follows that
\begin{equation} \label{eq:robPCAexp:2}
	\textrm{RHS of \eqref{eq:robPCAexp:1} } \geq \min_{\X \in C} \norm{\X- \left(\X_{t} + \Y_{t} - \W_{t} - \frac{1}{\eta_{t}}\nabla_{t}\right)}_{F}^{2},
\end{equation}
where $C := \left\{ \X : \, \norm{\X}_{\nucnorm} \leq \tau , \, \rank(\X) \leq r^{\ast} \right\}$. The solution to the minimization problem on the RHS of \eqref{eq:robPCAexp:2} is given simply by computing the rank-$r^{\ast}$ singular value decomposition of the matrix $\A_{t} = \left(\X_{t} + \Y_{t} - \W_{t} -\left(1/\eta_{t}\right)\nabla_{t}\right)$, and projecting the resulted vector of singular values onto the $\ell_{1}$-norm ball of radius $\tau$ (which can be done in $O(r^{\ast}\log(r^{\ast}))$ time). Thus, indeed the time to compute the update for $\X_{t + 1}$ on each iteration of Algorithm \ref{alg:GCG} is dominated by a single rank-$r^{\ast}$ SVD computation. This observation holds for all the following discussions in this section as well. This low-rank SVD approach was already suggested in the recent work \cite{allen2017linear}, that studied smooth and strongly convex minimization over the nuclear-norm ball (which differs from our setting). 

\subsubsection{Improved complexities for low/high SNR regimes} \label{sec:SNR} 
In case that $\left(\X^{\ast} , \Y^{\ast}\right)$ is an (say, unique) optimal solution to Problem \eqref{eq:robustPCA}, which satisfies $\norm{\Y^{\ast}}_{F}^{2} << \norm{\X^{\ast}}_{F}^{2}$, i.e., a high signal-to-noise ratio regime, we expect that $D_{\mX} >> D_{\mY}$, where $D_{\mX}$ and $D_{\mY}$ are the Euclidean diameters of the nuclear norm ball and the $\ell_{1}$-norm ball, respectively. In this case, the result of Theroem \ref{thm:minDiam} is appealing since the convergence rate depends only on $D_{\mY}^{2}$ and not on $D_{\mX}^{2} + D_{\mY}^{2}$ as standard algorithms/analyses. In the opposite case, i.e., $D_{\mX} << D_{\mY}$, which naturally corresponds to a low signal-to-noise ratio regime, since $\X$ and $\Y$ are interchangeable in our setting, we can reverse their roles in the optimization and get via Theorem \ref{thm:minDiam} dependency only on $D_{\mX}$. Moreover, now the nuclear-norm constrained variable (assuming the role of $\Y$ in Algorithm \ref{alg:GCG}) is only updated via a conditional gradient update, i.e., requires only a rank-one SVD computation on each iteration. In particular, statistical recovery results such as the seminal work \cite{Candes11}, show that under suitable assumptions on the data, exact recovery is possible in both of these cases, even for instance, when $\norm{\Y^{\ast}}_{F}^{2}/\norm{\X^{\ast}}_{F}^{2} = \textrm{poly}(n)$.

\subsubsection{Replacing the $\ell_1$ constraint with a $\ell_{1+\delta}$ constraint} 
The $\ell_{1}$-norm is traditionally used in Robust PCA to constrain/regularize the sparse noisy component. The standard geometric intuition is that since the boundary of the $\ell_{1}$-norm ball becomes sharp near the axes, this choice promotes sparse solutions. This property also holds for an $\ell_{p}$-norm ball where $p$ is sufficiently close to 1. Thus, it might be reasonable to replace the $\ell_{1}$-norm constraint on $\Y$ with a $\ell_{1 + \delta}$-norm constraint for some small constant $\delta$, which results in a strongly convex feasible set for the variable $\Y$ (see \cite{GH15}). Using Theorem \ref{T:StrongSet}, we will obtain an improved convergence rate of $O(1/t^{2})$ instead of $O(1/t)$, practically without increasing the computational complexity per iteration (since $\Y$ is updated via a conditional gradient update and linear optimization over a $\ell_{p}$-norm ball can be carried-out in linear time \cite{GH15}). 
\medskip

In order to demonstrate the plausibility of using the $\ell_{1 + \delta}$-norm instead of $\ell_{1}$-norm, in Table \ref{table:Lp} we present results on synthetic data (similar to those done in \cite{Candes11}), which show that already for a moderate value of $\delta = 0.2$ we obtain quite satisfactory recovery results.

\begin{table*}\renewcommand{\arraystretch}{1.3}
	\begin{center}{\footnotesize
  		\begin{tabular}{| l | c | c | c | c | c |} \hline
     		& $\delta = 0.05$ & $\delta = 0.1$ & $\delta = 0.2$ & $\delta = 0.3$ & $\delta = 0.4$ \\ \hline
    			$\norm{\X - \X^{\ast}}_{F}^{2}/\norm{\X^{\ast}}_{F}^{2}$ & $9.2 \times 10^{-5}$ & $6.0 \times 10^{-4}$ & $4.3 \times 10^{-2}$ & $0.25$ & $0.61$ \\ \hline
    			$\norm{\Y - \Y^{\ast}}_{F}^{2}/\norm{\Y^{\ast}}_{F}^{2}$ & $1.5 \times 10^{-6}$ & $6.4 \times 10^{-6}$ & $4.2 \times 10^{-4}$ & $2.5 \times 10^{-3}$ & $6.1 \times 10^{-3}$ \\ \hline
  		\end{tabular}
  		\caption{Empirical results for solving Problem \eqref{eq:robustPCA} with $\ell_{1 + \delta}$-norm constraint on $\Y$ instead of $\ell_{1}$-norm. The input matrix is $\M = \X^{\ast} + \Y^{\ast}$, where $\X^{\ast} = \U\V^{\top}$ for $\U , \V \in \reals^{n \times r}$ with entries sampled i.i.d. from $\mathcal{N}(0 , 1/n)$ with $n = 100$ and $r = 10$. Every entry in $\Y^{\ast}$ is set i.i.d. to $0$ w.p. $0.9$ and to either $+1$ or $-1$ w.p. $0.05$. For simplicity we set exact bounds $\tau = \norm{\X^{\ast}}_{\nucnorm}$ and $s = \norm{\Y^{\ast}}_{1 + \delta}$. First row gives the relative recovery error of the low-rank signal and the second row gives the relative recovery error of the sparse noise component, averaged over 10 i.i.d. experiments.}
		\label{table:Lp}}
		\end{center}
		\vskip -0.2in
\end{table*}\renewcommand{\arraystretch}{1}

\subsubsection{Replacing the $\ell_{1}$-norm regularizer with an elastic net regularizer} 
In certain cases it may be beneficial to replace the $\ell_{1}$-norm constraint (or regularizer) of the variable $\Y$ in Problem \eqref{eq:robustPCA} with an elastic net regularizer, i.e., to take $\mR_{\mY}\left(\Y\right) = \lambda_{1}\norm{\Y}_{1} + \lambda_{2}\norm{\Y}_{F}^{2}$, for some $\lambda_{1} , \lambda_{2} > 0$. The elastic net is a popular alternative to the standard $\ell_{1}$-norm regularizer for problems such as LASSO (see, for instance \cite{zou2005regularization}). As opposed to the $\ell_{1}$-norm regularizer, the elastic net is strongly convex (though not differentiable). Thus, with such a choice for $\mR_{\mY}\left(\cdot\right)$, by invoking Theorem \ref{T:StrongReg}, Algorithm \ref{alg:GCG} guarantees a linear convergence rate. We note that when using the elastic net regularizer, the computation of $\W_{t}$ on each iteration of Algorithm \ref{alg:GCG} requires to solve the optimization problem:
\begin{equation*}
	\argmin_{\W \in \vecspace} \left\{ \act{\W  , \nabla_{t}} + \lambda_{1}\norm{\W}_{1} + \lambda_{2}\norm{\W}_{F}^{2} \right\} = \argmin_{\W \in \vecspace} \left\{ \lambda_{1}\norm{\W}_{1} + \lambda_{2}\norm{\W + \frac{1}{2\lambda_{2}}\nabla_{t}}_{F}^{2} \right\},
\end{equation*}
where we again use the short notation $\nabla_{t} = \nabla g\left(\X_{t} + \Y_{t}\right)$. In the optimization problem above, the RHS admits a well-known closed-form solution given by the \textit{shrinkage/soft-thresholding operator}, which can be computed in linear time (i.e., $O(mn)$ time), see for instance \cite{FISTA}.

\section{Rate of Convergence Analysis}\label{sec:analysis}
In this section we provide the proofs for Theorems \ref{thm:minDiam}, \ref{T:StrongSet}, and \ref{T:StrongReg}. Throughout this section and for the simplicity of the yet to come developments we denote, for all $t \geq 1$, $\Z_{t} := \X_{t} + \Y_{t}$, $\U_{t} := \V_{t} + \W_{t}$, and $\Q_{t} := \left(\X_{t} , \Y_{t}\right)$. Note that, using these notations we obviously have that $\Z_{t + 1} = \left(1  - \eta_{t}\right)\Z_{t} + \eta_{t}\U_{t}$. Similarly, for an optimal solution $\Q^{\ast} :=\left(\X^{\ast} , \Y^{\ast}\right)$ of Problem \eqref{GeneralModel} we denote $\Z^{\ast} := \X^{\ast} + \Y^{\ast}$.
\medskip

We will need the following technical result which forms the basis for the proofs of all stated theorems. 
\begin{proposition} \label{P:Tech}
	Let $\left\{ \left(\X_{t} , \Y_{t}\right) \right\}_{t \geq 1}$ be a sequence generated by Algorithm \ref{alg:GCG}. Then, for all $t \geq 1$, we have that
\begin{align}
	f\left(\Q_{t + 1}\right) & \leq \left(1 - \eta_{t}\right)f\left(\Q_{t}\right) + \eta_{t}\left(g\left(\Z_{t}\right) + \mR_{\mX}\left(\X^{\ast}\right) + \mR_{\mY}\left(\W_{t}\right)\right) \nonumber \\
	& + \eta_{t}\act{\X^{\ast} + \W_{t} - \Z_{t} , \nabla{}g\left(\Z_{t}\right)} + \eta_{t}^{2}\beta\left(\norm{\Z_{t} - \Z^{\ast}}^{2} + \norm{\W_{t} - \Y^{\ast}}^{2}\right). \label{P:Tech:1}
\end{align}
\end{proposition}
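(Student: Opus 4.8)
The plan is to begin from the identity $f(\Q_{t+1}) = g(\Z_{t+1}) + \mR_{\mX}(\X_{t+1}) + \mR_{\mY}(\Y_{t+1})$ and to bound each of the three summands separately. Since $\eta_t \in [0,1]$, the updates $\X_{t+1} = (1-\eta_t)\X_t + \eta_t\V_t$ and $\Y_{t+1} = (1-\eta_t)\Y_t + \eta_t\W_t$ are convex combinations, so convexity of $\mR_{\mX}$ and $\mR_{\mY}$ gives $\mR_{\mX}(\X_{t+1}) \le (1-\eta_t)\mR_{\mX}(\X_t) + \eta_t\mR_{\mX}(\V_t)$ and the analogous bound for $\mR_{\mY}$. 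For the smooth term, I would write $\Z_{t+1} - \Z_t = \eta_t(\V_t + \W_t - \Z_t)$ and invoke the $\beta$-smoothness of $g$ (Definition \ref{def:smoothfunc}) to obtain
\begin{equation*}
g(\Z_{t+1}) \le g(\Z_t) + \eta_t\act{\V_t + \W_t - \Z_t , \nabla g(\Z_t)} + \frac{\eta_t^2\beta}{2}\norm{\V_t + \W_t - \Z_t}^2 .
\end{equation*}

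Next I would add the three estimates and regroup all terms carrying the factor $(1-\eta_t)$. Writing $g(\Z_t) = (1-\eta_t)g(\Z_t) + \eta_t g(\Z_t)$, the $(1-\eta_t)$-weighted pieces combine exactly into $(1-\eta_t)\bigl[g(\Z_t) + \mR_{\mX}(\X_t) + \mR_{\mY}(\Y_t)\bigr] = (1-\eta_t)f(\Q_t)$. What survives is a collection of $\eta_t$-scaled terms, yielding a bound of the form $f(\Q_{t+1}) \le (1-\eta_t)f(\Q_t) + \eta_t g(\Z_t) + \eta_t\mR_{\mY}(\W_t) + \eta_t\mR_{\mX}(\V_t) + \eta_t\act{\V_t + \W_t - \Z_t , \nabla g(\Z_t)} + \frac{\eta_t^2\beta}{2}\norm{\V_t + \W_t - \Z_t}^2$. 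Comparing with the target \eqref{P:Tech:1}, the only task left is to replace the three terms still involving $\V_t$ by their $\X^*$-counterparts.

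This replacement is the key step, and it is where the defining property of $\V_t$ enters. Using the suboptimality condition noted in Algorithm \ref{alg:GCG}, $\phi_t(\V_t) \le \phi_t(\X^*)$, I would expand both sides and rearrange into
\begin{equation*}
\mR_{\mX}(\V_t) + \act{\V_t , \nabla g(\Z_t)} + \frac{\eta_t\beta}{2}\norm{\V_t + \W_t - \Z_t}^2 \le \mR_{\mX}(\X^*) + \act{\X^* , \nabla g(\Z_t)} + \frac{\eta_t\beta}{2}\norm{\X^* + \W_t - \Z_t}^2 .
\end{equation*}
Multiplying by $\eta_t$ and substituting into the bound from the previous paragraph eliminates every occurrence of $\V_t$; merging the inner products via $\eta_t\act{\W_t - \Z_t , \nabla g(\Z_t)} + \eta_t\act{\X^* , \nabla g(\Z_t)} = \eta_t\act{\X^* + \W_t - \Z_t , \nabla g(\Z_t)}$ leaves exactly the desired linear and regularizer terms, plus the residual quadratic $\frac{\eta_t^2\beta}{2}\norm{\X^* + \W_t - \Z_t}^2$.

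Finally, to match the last term of \eqref{P:Tech:1} I would control this quadratic. Using $\X^* = \Z^* - \Y^*$, decompose $\X^* + \W_t - \Z_t = (\Z^* - \Z_t) + (\W_t - \Y^*)$, and apply the elementary bound $\norm{a+b}^2 \le 2\norm{a}^2 + 2\norm{b}^2$ (valid for any norm via the triangle inequality and $2xy \le x^2 + y^2$) to conclude $\frac{1}{2}\norm{\X^* + \W_t - \Z_t}^2 \le \norm{\Z_t - \Z^*}^2 + \norm{\W_t - \Y^*}^2$; multiplying by $\eta_t^2\beta$ produces precisely the final term of \eqref{P:Tech:1}. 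I expect the careful bookkeeping of the regrouping in the second step, and the correct invocation of the \emph{sub}optimality inequality $\phi_t(\V_t) \le \phi_t(\X^*)$ (rather than exact minimality), to be the only delicate points; the smoothness and convexity estimates and the closing quadratic inequality are routine.
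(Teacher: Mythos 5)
Your proposal is correct and follows essentially the same route as the paper's proof: $\beta$-smoothness of $g$ applied to $\Z_{t+1}=(1-\eta_t)\Z_t+\eta_t(\V_t+\W_t)$, convexity of the regularizers on the convex-combination updates, the suboptimality inequality $\phi_t(\V_t)\le\phi_t(\X^*)$ scaled by $\eta_t$ to swap $\V_t$ for $\X^*$, and finally $\norm{a+b}^2\le 2\norm{a}^2+2\norm{b}^2$ on the decomposition $\X^*+\W_t-\Z_t=(\Z^*-\Z_t)+(\W_t-\Y^*)$. The only difference is cosmetic (you apply the regularizer convexity before rather than after assembling the smoothness bound), so there is nothing to correct.
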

\begin{proof}
Fix $t \geq 1$. Observe that by the update rule of Algorithm \ref{alg:GCG} (see step 6 of the algorithm), it holds that 
\begin{equation*}
	\X_{t + 1} + \Y_{t + 1} = \left(1 - \eta_{t}\right)\left(\X_{t} + \Y_{t}\right) + \eta_{t}\left(\V_{t} + \W_{t}\right).
\end{equation*}
Thus, since $g\left(\cdot\right)$ is $\beta$-smooth, it follows that
\begin{align*}
	g\left(\X_{t + 1} + \Y_{t + 1}\right) & \leq g\left(\X_{t} + \Y_{t}\right) + \eta_{t}\act{\left(\V_{t} + \W_{t}\right) - \left(\X_{t} + \Y_{t}\right) , \nabla g\left(\X_{t} + \Y_{t}\right)} \\
	& + \frac{\eta_{t}^{2}\beta}{2}\norm{\left(\X_{t} + \Y_{t}\right) - \left(\V_{t} + \W_t\right)}^{2} \\
	& = g\left(\Z_t\right) + \eta_{t}\act{\V_{t} + \W_{t} - \Z_{t} , \nabla g\left(\Z_{t}\right)} + \frac{\eta_{t}^{2}\beta}{2}\norm{\Z_{t} - \V_{t} - \W_{t}}^{2}. 
\end{align*}
Using the above inequality we can write,
\begin{align} \label{eq:prop:1}
	f\left(\X_{t + 1} , \Y_{t + 1}\right) & = g\left(\X_{t + 1} + \Y_{t + 1}\right) + \mR_{\mX}\left(\X_{t   + 1}\right) + \mR_{\mY}\left(\Y_{t + 1}\right) \nonumber\\
	& \leq g\left(\Z_t\right) + \mR_{\mX}\left(\X_{t + 1}\right) + \mR_{\mY}\left(\Y_{t + 1}\right) + \eta_{t}\act{\V_{t} + \W_{t} - \Z_{t} , \nabla g\left(\Z_t\right)} \nonumber \\
	& + \frac{\eta_{t}^{2}\beta}{2}\norm{\Z_{t} - \V_{t} - \W_{t}}^{2} \nonumber\\
	& \underset{(a)}{\leq} \left(1 - \eta_{t}\right)\left(g\left(\Z_{t}\right) + \mR_{\mX}\left(\X_{t}\right) + \mR_{\mY}\left(\Y_{t}\right)\right) \nonumber \\
	& + \eta_{t}\left(g\left(\Z_{t}\right) + \mR_{\mX}\left(\V_{t}\right) + \mR_{\mY}\left(\W_{t}\right)\right) \nonumber \\
	& + \eta_{t}\act{\V_{t} + \W_{t} - \Z_{t} , \nabla g\left(\Z_{t}\right)} + \frac{\eta_{t}^{2}\beta}{2}\norm{\X_{t} + \Y_{t} - \V_{t} - \W_{t}}^{2} \nonumber \\
	& = \left(1 - \eta_{t}\right)f\left(\X_{t} , \Y_{t}\right) + \eta_{t}\left(g\left(\Z_{t}\right) + \mR_{\mX}\left(\V_{t}\right) + \mR_{\mY}\left(\W_{t}\right)\right) \nonumber \\
	& + \eta_{t}\act{\V_{t} + \W_{t} - \Z_{t} , \nabla g\left(\Z_{t}\right)} + \frac{\eta_{t}^{2}\beta}{2}\norm{\Z_{t} - \V_{t} - \W_{t}}^{2} \nonumber \\
	& \underset{(b)}{\leq} \left(1 - \eta_{t}\right)f\left(\X_{t} , \Y_{t}\right) + \eta_{t}\left(g\left(\Z_{t}\right) + \mR_{\mX}\left(\X^{\ast}\right) + \mR_{\mY}\left(\W_{t}\right)\right) \nonumber \\
	& + \eta_{t}\act{\X^{\ast} + \W_{t} - \Z_{t} , \nabla g\left(\Z_{t}\right)} + \frac{\eta_{t}^{2}\beta}{2}\norm{\Z_{t} - \X^{\ast} - \W_{t}}^{2},
\end{align}
where (a) follows from the convexity of $\mR_{\mX}\left(\cdot\right)$ and $\mR_{\mY}\left(\cdot\right)$, while (b) follows from the choice of $\V_{t}$. Using the inequality $\left(a + b\right)^{2} \leq 2a^{2} + 2b^{2}$ which holds true for all $a , b \in \reals$, we obtain
\begin{align} \label{eq:prop:2}
	\norm{\Z_{t} - \X^{\ast} + \W_{t}}^{2} & = \norm{\Z_{t} - \X^{\ast} - \Y^{\ast} + \left(\Y^{\ast} - \W_{t}\right)}^{2} \nonumber \\
	& \leq 2\norm{\Z_{t} - \Z^{\ast}}^{2} + 2\norm{\W_{t} - \Y^{\ast}}^{2},
\end{align}
where the last equality follows from the definitions of $\Z_{t}$ and $\Z^{\ast}$.

Finally, plugging the RHS of Eq. \eqref{eq:prop:2} into the RHS of Eq. \eqref{eq:prop:1} completes the proof of the proposition.
\end{proof}
We now prove Theorem \ref{thm:minDiam}. For convenience, we first state the theorem in full details.
\begin{theorem}\label{thm:minDiam:detail}
Assume that $\mR_{\mY} := \iota_{\mY}$ where $\mY$ is a nonempty, closed and convex subset of $\vecspace$. Let $\left\{ \left(\X_{t} , \Y_{t}\right) \right\}_{t \geq 1}$ be a sequence generated by Algorithm \ref{alg:GCG} with the following step-sizes:
\begin{equation}\label{eq:mindiam:step}
	\eta_{t} = 
		\begin{cases}
        		\frac{\alpha}{2\beta}, & \mbox{if }t \leq t_{0}, \\
        		\frac{2}{t - t_{0} + \frac{4\beta}{\alpha}}, & \mbox{if }t > t_{0},
       \end{cases}
\end{equation} 
where $t_{0} := \max \left\{ 0 , \left\lceil{ 2\beta/\left(\alpha\right)\ln\left(2C/\left(\alpha D_{\mY}^{2}\right)\right)}\right\rceil \right\}$, for $C$ satisfying $C \geq f\left(\X_{1} , \Y_{1}\right) - f^{\ast}$. Then, for all $t \geq t_{0} + 1$ it holds that
\begin{equation*}
	f\left(\X_{t} , \Y_{t}\right) - f\left(\X^{\ast} , \Y^{\ast}\right) \leq \frac{4\beta{}D_{\mY}^{2}}{t - t_{0} - 1 + \frac{4\beta}{\alpha}}.
\end{equation*}
\end{theorem}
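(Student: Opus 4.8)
The plan is to track the objective gap $h_{t} := f\left(\Q_{t}\right) - f^{\ast}$ and to distill from Proposition \ref{P:Tech} a clean one-step recursion of the form $h_{t+1} \leq \left(1 - \eta_{t}\right)h_{t} + \eta_{t}^{2}\beta D_{\mY}^{2}$, after which the stated rate follows by a two-phase analysis of this recursion. First I would specialize Proposition \ref{P:Tech} to the present setting. Since $\mR_{\mY} = \iota_{\mY}$ and both $\W_{t}, \Y^{\ast} \in \mY$, the terms $\mR_{\mY}\left(\W_{t}\right)$ and $\mR_{\mY}\left(\Y^{\ast}\right)$ vanish, so in particular $g\left(\Z^{\ast}\right) + \mR_{\mX}\left(\X^{\ast}\right) = f^{\ast}$. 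Next, step 4 of Algorithm \ref{alg:GCG} gives $\W_{t} \in \argmin_{\W \in \mY}\act{\W , \nabla g\left(\Z_{t}\right)}$, and feasibility of $\Y^{\ast}$ yields $\act{\W_{t} , \nabla g\left(\Z_{t}\right)} \leq \act{\Y^{\ast} , \nabla g\left(\Z_{t}\right)}$, which lets me replace $\W_{t}$ by $\Y^{\ast}$ in the linear term and bound $\act{\X^{\ast} + \W_{t} - \Z_{t} , \nabla g\left(\Z_{t}\right)} \leq \act{\Z^{\ast} - \Z_{t} , \nabla g\left(\Z_{t}\right)}$.

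The crucial use of strong convexity enters here. By $\alpha$-strong convexity of $g$ applied at $\Z_{t}$ and $\Z^{\ast}$, I get $g\left(\Z_{t}\right) + \act{\Z^{\ast} - \Z_{t} , \nabla g\left(\Z_{t}\right)} \leq g\left(\Z^{\ast}\right) - \left(\alpha/2\right)\norm{\Z_{t} - \Z^{\ast}}^{2}$. Substituting and subtracting $f^{\ast}$ collapses the bound of Proposition \ref{P:Tech} to
\begin{equation*}
	h_{t+1} \leq \left(1 - \eta_{t}\right)h_{t} - \eta_{t}\frac{\alpha}{2}\norm{\Z_{t} - \Z^{\ast}}^{2} + \eta_{t}^{2}\beta\norm{\Z_{t} - \Z^{\ast}}^{2} + \eta_{t}^{2}\beta\norm{\W_{t} - \Y^{\ast}}^{2}.
\end{equation*}
Since $\W_{t}, \Y^{\ast} \in \mY$, the last term is at most $\eta_{t}^{2}\beta D_{\mY}^{2}$. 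The heart of the argument is that every step-size in \eqref{eq:mindiam:step} satisfies $\eta_{t} \leq \alpha/\left(2\beta\right)$ (for $t \leq t_{0}$ by definition, and for $t > t_{0}$ because $\eta_{t} = 2/\left(t - t_{0} + 4\beta/\alpha\right)$ is decreasing and at most $2/\left(1 + 4\beta/\alpha\right) \leq \alpha/\left(2\beta\right)$), so $\eta_{t}^{2}\beta \leq \eta_{t}\alpha/2$ and the two $\norm{\Z_{t} - \Z^{\ast}}^{2}$ terms cancel (the strong-convexity term dominating). This is exactly the step where strong convexity of $g$ eliminates the otherwise uncontrolled $\norm{\Z_{t} - \Z^{\ast}}^{2}$ contribution, which is what removes any dependence on $D_{\mX}$ and leaves only $D_{\mY}$. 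The outcome is the target recursion $h_{t+1} \leq \left(1 - \eta_{t}\right)h_{t} + \eta_{t}^{2}\beta D_{\mY}^{2}$, valid for all $t \geq 1$.

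It remains to solve this recursion. In the constant-step phase $t \leq t_{0}$ with $\eta = \alpha/\left(2\beta\right)$, the map has fixed point $\eta\beta D_{\mY}^{2} = \left(\alpha/2\right)D_{\mY}^{2}$, and I would rewrite the recursion as $h_{t+1} - \left(\alpha/2\right)D_{\mY}^{2} \leq \left(1 - \eta\right)\left(h_{t} - \left(\alpha/2\right)D_{\mY}^{2}\right)$, giving geometric contraction $h_{t_{0}+1} - \left(\alpha/2\right)D_{\mY}^{2} \leq \left(1-\eta\right)^{t_{0}}\left(h_{1} - \left(\alpha/2\right)D_{\mY}^{2}\right)$. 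Using $1 - \eta \leq e^{-\eta}$, $h_{1} - f^{\ast} \leq C$, and the calibrated value $t_{0} = \lceil \left(2\beta/\alpha\right)\ln\left(2C/\left(\alpha D_{\mY}^{2}\right)\right)\rceil$, this forces $h_{t_{0}+1} \leq \alpha D_{\mY}^{2}$, which is precisely the claimed bound evaluated at $t = t_{0}+1$ (where its right-hand side equals $\alpha D_{\mY}^{2}$). For the diminishing-step phase $t > t_{0}$, I would prove $h_{t} \leq 4\beta D_{\mY}^{2}/\left(t - t_{0} - 1 + 4\beta/\alpha\right)$ by induction: writing $b_{t} := t - t_{0} - 1 + 4\beta/\alpha$ so that $\eta_{t} = 2/b_{t+1}$ and $b_{t+1} = b_{t} + 1$, substituting the inductive hypothesis into the recursion reduces the step to the elementary inequality $\left(b-2\right)/\left(b\left(b-1\right)\right) + 1/b^{2} \leq 1/b$ with $b = b_{t+1}$, equivalently $b^{2} - b - 1 \leq b^{2} - b$, which holds. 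I expect the main obstacle to be purely organizational rather than conceptual: verifying the uniform bound $\eta_{t} \leq \alpha/\left(2\beta\right)$ across both phases so the cancellation is licensed, and checking that the two phases splice together at $t = t_{0}+1$ so that the constant-step endpoint serves as a valid base case for the induction.
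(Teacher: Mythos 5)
Your proposal is correct and follows essentially the same route as the paper: you derive the identical one-step recursion $h_{t+1} \leq \left(1 - \eta_{t}\right)h_{t} + \eta_{t}^{2}\beta D_{\mY}^{2}$ from Proposition \ref{P:Tech} by exploiting the optimality of $\W_{t}$, the diameter bound on $\mY$, and the strong convexity of $g$ to cancel the $\norm{\Z_{t} - \Z^{\ast}}^{2}$ terms under $\eta_{t} \leq \alpha/\left(2\beta\right)$, and then solve it with the same two-phase argument as the paper's Lemma \ref{L:Reco}. Your presentation of the constant-step phase via contraction toward the fixed point $\left(\alpha/2\right)D_{\mY}^{2}$ is a cosmetic variant of the paper's geometric-series summation, and your induction for the diminishing-step phase matches the paper's calculation.
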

\begin{proof}
From the choice of $\W_{t}$ we have that
\begin{equation} \label{eq:thmDiam:1}
	\act{\W_{t} , \nabla g\left(\Z_{t}\right)} + \mR_{\mY}\left(\W_{t}\right) \leq \act{\Y^{\ast} , \nabla g\left(\Z_{t}\right)} + \mR_{\mY}\left(\Y^{\ast}\right).
\end{equation}
Now, using this in Proposition \ref{P:Tech}, we get for all $t \geq 1$, that
\begin{align}\label{eq:thmDiam:2}
	f\left(\Q_{t + 1}\right) & \leq \left(1 - \eta_{t}\right)f\left(\Q_{t}\right) + \eta_{t}\left(g\left(\Z_{t}\right) + \mR_{\mY}\left(\Y^{\ast}\right) + \mR_{\mX}\left(\X^{\ast}\right)\right) \nonumber \\
	& + \eta_{t}\act{\X^{\ast} + \Y^{\ast} - \Z_{t} , \nabla g\left(\Z_{t}\right)} + \eta_{t}^{2}\beta\left(\norm{\Z_{t} - \Z^{\ast}}^{2} + \norm{\W_{t} - \Y^{\ast}}^{2}\right) \nonumber \\
	& \leq \left(1 - \eta_{t}\right)f\left(\Q_{t}\right) + \eta_{t}\left(g\left(\Z_{t}\right) + \mR_{\mY}\left(\Y^{\ast}\right) + \mR_{\mX}\left(\X^{\ast}\right)\right) \nonumber \\
	& + \eta_{t}\act{\X^{\ast} + \Y^{\ast} - \Z_{t} , \nabla g\left(\Z_{t}\right)} + \eta_{t}^{2}\beta\left(\norm{\Z_{t} - \Z^{\ast}}^{2} + D_{\mY}^{2}\right),  
\end{align}
where the last inequality follows from the fact that $\W_{t} , \Y^{\ast} \in \mY$. On the other hand, from the strong convexity of $g\left(\cdot\right)$ we obtain that
\begin{equation*}
	g\left(\Z_{t}\right) + \act{\X^{\ast} + \Y^{\ast} - \Z_{t} , \nabla g\left(\Z_{t}\right)} \leq g\left(\Z^{\ast}\right) - \frac{\alpha}{2}\norm{\Z_{t} - \Z^{\ast}}^{2}.
\end{equation*} 
Therefore, by combining these two inequalities we derive that
\begin{equation*}
	f\left(\Q_{t + 1}\right) \leq \left(1 - \eta_{t}\right)f\left(\Q_{t}\right) + \eta_{t}f\left(\Q^{\ast}\right) - \eta_{t}\left(\frac{\alpha}{2} - \eta_{t}\beta\right)\norm{\Z_{t} - \Z^{\ast}}^{2} + \eta_{t}^{2}\beta D_{\mY}^{2}.
\end{equation*}
Subtracting $f\left(\Q^*\right)$ from both sides of the inequality and by denoting $h_{t} := f\left(\Q_{t}\right) - f\left(\Q^*\right)$, we obtain that $h_{t + 1} \leq \left(1 - \eta_{t}\right)h_{t} + \eta_{t}^{2}\beta D_{\mY}^{2}$ holds true for all $0 < \eta_{t} \leq \alpha/\left(2\beta\right)$. The result now follows from simple induction arguments and the choice of step-sizes detailed in the theorem (for details see Lemma \ref{L:Reco} in the appendix below). 
\end{proof}
Before proving Theorem \ref{T:StrongSet} we would like to comment about the constant $C$, which was used in the result above and appears in the step-size.
\begin{remark}
	The constant $C$, even though appears in the step-size of the algorithm, can be easily bounded from above as we describe now. Suppose, we are setting the points $\W_{1}$ and $\V_{1}$ to be used in our algorithm as follows:
	\begin{equation*}
		\W_{1} = \argmin\limits_{\W \in \vecspace} \left\{ \mR_{\mY}\left(\W\right) + \act{\W , \nabla g\left(\X_{1} + \Y_{1}\right)} \right\},
	\end{equation*}
	and
	\begin{equation*}
		\V_{1} = \argmin\limits_{\V \in \vecspace} \left\{ \mR_{\mX}\left(\V\right) + \act{\V , \nabla g\left(\X_{1} + \Y_{1}\right)} \right\}.
	\end{equation*}
	For these choices we obviously have (using optimality conditions) that
	\begin{align*}
		\mR_{\mY}\left(\W_{1}\right) + \mR_{\mX}\left(\V_{1}\right) + \act{\W_{1} + \V_{1} , \nabla g\left(\X_{1} + \Y_{1}\right)} & \leq \mR_{\mY}\left(\Y^{\ast}\right) + \mR_{\mX}\left(\X^{\ast}\right) \\
		& + \act{\Y^{\ast} + \X^{\ast} , \nabla g\left(\X_{1} + \Y_{1}\right)}.
	\end{align*}
	Hence, using the gradient inequality on the function $g$, yields that
	\begin{align*}
		f\left(\X_{1} , \Y_{1}\right) - f^{\ast} & = g\left(\X_{1} , \Y_{1}\right) - g\left(\X^{\ast} , \Y^{\ast}\right) + \mR_{\mY}\left(\Y_{1}\right) + \mR_{\mX}\left(\X_{1}\right) - \mR_{\mY}\left(\Y^{\ast}\right) - \mR_{\mX}\left(\X^{\ast}\right) \\
		& \leq \act{\left(\X_{1} + \Y_{1}\right) - \left(\Y^{\ast} + \X^{\ast}\right) , \nabla g\left(\X_{1} + \Y_{1}\right)} + \mR_{\mY}\left(\Y_{1}\right) + \mR_{\mX}\left(\X_{1}\right) \\
		& - \mR_{\mY}\left(\W_{1}\right) - \mR_{\mX}\left(\V_{1}\right) + \act{\left(\Y^{\ast} + \X^{\ast}\right) - \left(\W_{1} + \V_{1}\right) , \nabla g\left(\X_{1} + \Y_{1}\right)} \\
		& = \act{\left(\X_{1} + \Y_{1}\right) - \left(\W_{1} + \V_{1}\right) , \nabla g\left(\X_{1} + \Y_{1}\right)} + \mR_{\mY}\left(\Y_{1}\right) + \mR_{\mX}\left(\X_{1}\right) \\
		& - \mR_{\mY}\left(\W_{1}\right) - \mR_{\mX}\left(\V_{1}\right).
	\end{align*}
	The obtained bound does not depend on the optimal solution and therefore can be computed explicitly. 
	
	It should be noted that in the case of Robust PCA (e.g., Problem \eqref{eq:robustPCA}), we have that $\mR_{\mX}\left(\X\right) = \iota_{\norm{\cdot}_{\nucnorm} \leq \tau}\left(\X\right)$ and $\mR_{\mY}\left(\Y\right) = \iota_{\norm{\cdot}_{1} \leq s}\left(\Y\right)$. In this case, computing the matrices $\W_{1}$ and $\V_{1}$ is computationally very efficient, since it requires to compute a single leading singular vectors pair, and solving a single linear minimization problem over an $\ell_1$-ball, respectively.
\end{remark}
Now, we turn to prove Theorem \ref{T:StrongSet}. Again, we first state the theorem in full details.
\begin{theorem}
Assume that $\mR_{\mX} := \iota_{\mX}$ where $\mX$ is a nonempty, closed and convex subset of $\vecspace$ and $\mR_{\mY} := \iota_{\mY}$, where $\mY$ is an $\gamma$-strongly convex and closed subset of $\vecspace$. Let $\left\{ \left(\X_{t} , \Y_{t}\right) \right\}_{t \geq 1}$ be a sequence produced by Algorithm \ref{alg:GCG} using the step-size $\eta_{t} = \left(t - 1 + 6\beta/\alpha\right)^{-1}$ for all $t \geq 1$. Then, for all $t \geq 1$ it holds that
\begin{equation*}
	f\left(\X_{t} , \Y_{t}\right) - f^{\ast} \leq \frac{9\max\left\{ \frac{128\beta^{2}}{\alpha\gamma^{2}} , \frac{4\beta^{2}}{\alpha^{2}}\left(f\left(\X_{1} , \Y_{1}\right) - f^{\ast}\right) \right\}}{\left(t - 1 + 6\frac{\beta}{\alpha}\right)^{2}}.
\end{equation*}
Moreover, if there exists $G > 0$ such that $\min_{\X \in \mX , \Y \in \mY} \norm{\nabla g\left(\X + \Y\right)}_{\ast} \geq G$, then using a fixed step-size $\eta_{t} = \min\{ \alpha/\left(2\beta\right) ,\gamma G/\left(8\beta\right) \}$ for all $t \geq 1$, guarantees that 
\begin{equation*}
	f\left(\X_{t} , \Y_{t}\right) - f^{\ast} \leq \left(f\left(\X_{1} , \Y_{1}\right) - f^{\ast}\right)\cdot \exp\left(-\min\left\{ \frac{\alpha}{2\beta} , \frac{\gamma G}{8\beta}\right\}\left(t - 1\right)\right).
\end{equation*}
\end{theorem}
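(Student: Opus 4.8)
The plan is to specialize Proposition~\ref{P:Tech} to the present setting and then to extract, from the strong convexity of the set $\mY$, a quadratic gain in the conditional-gradient step that the plain analysis of Theorem~\ref{thm:minDiam:detail} discards. Since $\mR_{\mX}$ and $\mR_{\mY}$ are indicators, $\mR_{\mX}(\X^*) = \mR_{\mY}(\W_t) = \mR_{\mY}(\Y^*) = 0$ and $f^* = g(\Z^*)$, so Proposition~\ref{P:Tech} reduces, for $h_t := f(\Q_t) - f^*$, to a bound on $h_{t+1}$ involving only $g(\Z_t) - g(\Z^*)$, the linear term $\act{\X^* + \W_t - \Z_t , \nabla g(\Z_t)}$, and the error $\eta_t^2\beta(\norm{\Z_t - \Z^*}^2 + \norm{\W_t - \Y^*}^2)$. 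First I would invoke Definition~\ref{def:strongconvexset}: because $\W_t$ minimizes $\act{\cdot , \nabla g(\Z_t)}$ over the $\gamma$-strongly convex set $\mY$ and $\Y^* \in \mY$, choosing the unit vector opposite to $\nabla g(\Z_t)$ in the definition yields $\act{\W_t - \Y^* , \nabla g(\Z_t)} \leq -\tfrac{\gamma}{2}\norm{\nabla g(\Z_t)}_*\norm{\W_t - \Y^*}^2$. This quadratic gain is the crux enabling the faster rates.

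Combining this with the strong convexity of $g$, which gives $g(\Z_t) - g(\Z^*) + \act{\Z^* - \Z_t , \nabla g(\Z_t)} \leq -\tfrac{\alpha}{2}\norm{\Z_t - \Z^*}^2$, I would obtain the master recursion
\[
  h_{t+1} \leq (1-\eta_t)h_t - \eta_t\Bigl(\tfrac{\alpha}{2} - \eta_t\beta\Bigr)\norm{\Z_t - \Z^*}^2 - \eta_t\Bigl(\tfrac{\gamma}{2}\norm{\nabla g(\Z_t)}_* - \eta_t\beta\Bigr)\norm{\W_t - \Y^*}^2 .
\]
The linear-rate claim is then immediate: under $\norm{\nabla g(\Z_t)}_* \geq G$ (which holds since $\Z_t \in \mX + \mY$), a fixed step-size small enough that $\eta\beta \leq \tfrac{\alpha}{2}$ and $\eta\beta \leq \tfrac{\gamma}{2}G$ makes both parenthesized coefficients nonnegative, so both quadratic terms may be dropped and $h_{t+1} \leq (1-\eta)h_t$, which telescopes to the stated exponential rate; the particular choice $\eta = \min\{\alpha/(2\beta) , \gamma G/(8\beta)\}$ leaves room for the constants.

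For the $O(1/t^2)$ rate I would feed the self-bounding inequality \eqref{ie:largegrad}, applied to $g$ on the convex set $\mX + \mY$ with constrained minimizer $\Z^*$, namely $\norm{\nabla g(\Z_t)}_* \geq \sqrt{\alpha/2}\sqrt{h_t}$, into the coefficient of $\norm{\W_t - \Y^*}^2$. The effect is that the set-gain scales like $\sqrt{h_t}\,\norm{\W_t - \Y^*}^2$ while the smoothness error scales like $\eta_t^2\norm{\W_t - \Y^*}^2$; balancing these against the residual gap (which is at least $h_t$) turns the recursion into one of the form $h_{t+1} \leq h_t - \Theta\bigl(\tfrac{\gamma\sqrt{\alpha}}{\beta}\bigr)h_t^{3/2}$ along the intended trajectory, whose solution is exactly $\Theta(\beta^2/(\alpha\gamma^2 t^2))$ — matching the constant $128\beta^2/(\alpha\gamma^2)$. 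Concretely, I would prove by induction that $h_t \leq C_0/(t-1+6\beta/\alpha)^2$ with $C_0 = 9\max\{128\beta^2/(\alpha\gamma^2) , (4\beta^2/\alpha^2)h_1\}$, the second term in the maximum being dictated by the base case $t=1$, and verifying at each step that the fixed schedule $\eta_t = (t-1+6\beta/\alpha)^{-1}$ is small enough relative to $\tfrac{\gamma}{\beta}\norm{\nabla g(\Z_t)}_*$ to realize the required decrease; this is the analogue of Lemma~\ref{L:Reco}.

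I expect the induction in this last step to be the main obstacle. The difficulty is that the useful gain $\tfrac{\gamma}{2}\norm{\nabla g(\Z_t)}_*\norm{\W_t - \Y^*}^2$ couples the gradient magnitude with the uncontrolled conditional-gradient displacement $\norm{\W_t - \Y^*}$, and the fixed $\Theta(1/t)$ schedule only matches the adaptive optimal step $\Theta(\gamma\norm{\nabla g(\Z_t)}_*/\beta)$ on the target trajectory itself. Making this self-consistent requires a careful case analysis according to whether $h_t$ is above or below the target level $\Theta(\beta^2/(\alpha\gamma^2 t^2))$: when it is above, the lower bound \eqref{ie:largegrad} guarantees the gradient is large enough for the step to be admissible and the decrease sufficient, whereas when it is below one only needs to certify that the iteration cannot overshoot the target. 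Keeping the coupling between $\norm{\Z_t - \Z^*}$ and $\norm{\W_t - \Y^*}$ under control throughout — for instance via the convexity bound $h_t \leq \norm{\Z_t - \Z^*}\,\norm{\nabla g(\Z_t)}_*$ — is where the bookkeeping is most delicate.
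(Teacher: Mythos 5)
Your proposal is correct and follows essentially the same route as the paper: extract the quadratic gain $-\Theta(\gamma)\,\|\nabla g(\Z_t)\|_{\ast}\|\W_t-\Y^{\ast}\|^2$ from the strong convexity of $\mY$ and the linear optimality of $\W_t$, plug it into Proposition~\ref{P:Tech} together with the strong convexity of $g$ to get the master recursion, then either insert the lower bound $G$ (linear rate) or the bound \eqref{ie:largegrad} followed by the two-case induction (the $O(1/t^2)$ rate, which is exactly Lemma~\ref{lem:strongset:recurs}). The only cosmetic difference is your constant $\gamma/2$ in the set-gain inequality versus the paper's $\gamma/8$ obtained from the midpoint construction; both suffice.
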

\begin{proof}
Fix some iteration $t \geq 1$ and define the point $\tilde{\W}_{t} := \frac{1}{2}\left(\W_{t} + \Y^{\ast}\right) - \left(\gamma/8\right)\norm{\W_{t} - \Y^{\ast}}^{2}\P_{t}$ where $\P_{t} \in \argmin_{\P \in \vecspace, \norm{\P} \leq 1} \act{\P , \nabla g\left(\Z_{t}\right)}$. Note that since $\mY$ is an $\gamma$-strongly convex set, it follows from Definition \ref{def:strongconvexset} that $\tilde{\W}_{t} \in \mY$. Moreover, from the optimal choice of $\W_{t}$ we have that $\act{\W_{t} , \nabla g\left(\Z_{t}\right)} \leq \act{\tilde{\W}_{t} , \nabla g\left(\Z_{t}\right)}$. Thus, we have that
\begin{align} \label{eq:strongset:1}
	\act{\X^{\ast} + \W_{t} - \Z_{t} , \nabla g\left(\Z_{t}\right)} & \leq \act{\X^{\ast} + \tilde{\W}_{t} - \Z_{t} , \nabla g\left(\Z_{t}\right)} \nonumber \\
	& = act{\X^{\ast} - \X_{t} + \frac{\W_{t} + \Y^{\ast}}{2} - \frac{\gamma}{8}\norm{\W_{t} - \Y^{\ast}}^{2}\P_{t} - \Y_{t} , \nabla g\left(\Z_{t}\right)} \nonumber \\
	& \underset{(a)}{\leq} \act{\X^{\ast} - \X_{t} + \frac{\Y^{\ast} + \Y^{\ast}}{2} - \frac{\gamma}{8}\norm{\W_{t} - \Y^{\ast}}^{2}\P_{t} - \Y_{t} , \nabla g\left(\Z_{t}\right)} \nonumber\\
	& \underset{(b)}{=} \act{\Z^{\ast} - \Z_{t} , \nabla g\left(\Z_{t}\right)} - \frac{\gamma}{8}\norm{\W_{t} - \Y^{\ast}}^{2}\cdot\norm{\nabla g\left(\Z_{t}\right)}_{\ast},
\end{align}
where (a) follows from the fact that $\act{\W_{t} - \Y^{\ast}  , \nabla g\left(\Z_{t}\right)} \leq 0$ , and (b) follows from the definition of $\P_{t}$ and Holder's inequality.

Plugging Eq. \eqref{eq:strongset:1} into Eq. \eqref{P:Tech:1}, and recalling that $\W_{t} \in \mY$ (hence, $\mR_{\mY}\left(\W_{t}\right) = \mR_{\mY}\left(\Y^{\ast}\right) = \mR_{\mX}\left(\X^{\ast}\right) = 0$), we have that
\begin{align}
	f\left(\X_{t + 1} , \Y_{t + 1}\right) & \leq \left(1 - \eta_{t}\right)f\left(\X_{t} , \Y_{t}\right) + \eta_{t}g\left(\Z_{t}\right) + \eta_{t}\act{\X^{\ast} + \W_{t} - \Z_{t} , \nabla g\left(\Z_{t}\right)}\nonumber \\
	& + \eta_{t}^{2}\beta\left(\norm{\Z_{t} - \Z^{\ast}}^{2} + \norm{\W_{t} - \Y^{\ast}}^{2}\right) \nonumber \\
	& \leq \left(1 - \eta_{t}\right)f\left(\X_{t} , \Y_{t}\right) + \eta_{t}g\left(\Z_{t}\right) + \eta_{t}\act{\Z^{\ast} - \Z_{t} , \nabla g\left(\Z_{t}\right)} \nonumber \\
	& + \eta_{t}^{2}\beta\norm{\Z_{t} - \Z^{\ast}}^{2} - \eta_{t}\norm{\W_{t} - \Y^{\ast}}^{2}\left(\frac{\gamma}{8}\norm{\nabla g\left(\Z_{t}\right)}_{\ast} - \eta_{t}\beta\right) \nonumber \\
	& \underset{(a)}{\leq} \left(1 - \eta_{t}\right)f\left(\X_{t} , \Y_{t}\right) + \eta_{t}g\left(\Z^{\ast}\right) - \eta_{t}\norm{\Z_{t} - \Z^{\ast}}^{2}\left(\frac{\alpha}{2} - \eta_{t}\beta\right) \nonumber \\
	& - \eta_{t}\norm{\W_{t} - \Y^{\ast}}^{2}\left(\frac{\gamma}{8}\norm{\nabla g\left(\Z_{t}\right)}_{\ast} - \eta_{t}\beta\right) \nonumber \\
	& = \left(1 - \eta_{t}\right)f\left(\X_{t} , \Y_{t}\right) + \eta_{t}f\left(\X^{\ast} , \Y^{\ast}\right) - \eta_{t}\norm{\Z_{t} - \Z^{\ast}}^{2}\left(\frac{\alpha}{2} - \eta_{t}\beta\right) \nonumber \\
	& - \eta_{t}\norm{\W_{t} - \Y^{\ast}}^{2}\left(\frac{\gamma}{8}\norm{\nabla g\left(\Z_{t}\right)}_{\ast} - \eta_{t}\beta\right), \label{T:StrongSet:2}
\end{align}
where (a) follows from the strong convexity of $g(\cdot)$ since we have that 
\begin{equation*}
	\act{\Z^{\ast} - \Z_{t} , \nabla g\left(\Z_{t}\right)} \leq g\left(\Z^{\ast}\right) - g\left(\Z_{t}\right) - \frac{\alpha}{2}\norm{\Z_{t} - \Z^{\ast}}^{2}.
\end{equation*}
Using again the strong convexity of $g\left(\cdot\right)$, we have from Eq. \eqref{ie:largegrad} that
\begin{align*}
	\norm{\nabla g\left(\Z_{t}\right)}_{\ast} & \geq \sqrt{\frac{\alpha}{2}} \cdot \sqrt{g\left(\Z_{t}\right) - g\left(\Z^{\ast}\right)} \underset{(a)}{=} \sqrt{\frac{\alpha}{2}} \cdot \sqrt{f\left(\X_{t} , \Y_{t}\right) - f\left(\X^{\ast} , \Y^{\ast}\right)} \\
	& = \sqrt{\frac{\alpha}{2}} \cdot \sqrt{h_{t}},
\end{align*}
where $h_{t} := f\left(\X_{t} , \Y_{t}\right) - f\left(\X^{\ast} , \Y^{\ast}\right)$, and (a) follows since $\mR_{\mX}\left(\X_{t}\right) = \mR_{\mY}\left(\Y_{t}\right) = \mR_{\mX}\left(\X^{\ast}\right) = \mR_{\mY}\left(\Y^{\ast}\right) = 0$. Therefore, by subtracting $f\left(\X^{\ast} , \Y^{\ast}\right)$ from both sides of \eqref{T:StrongSet:2}, we get that
\begin{align*}
	h_{t + 1} & \leq \left(1 - \eta_{t}\right)h_{t} - \eta_{t}\left(\frac{\alpha}{2} - \eta_{t}\beta\right)\norm{\Z_{t} - \Z^{\ast}}^{2} \\
	& - \eta_{t}\norm{\W_{t} - \Y^{\ast}}^{2}\left(\frac{\gamma}{8}\sqrt{\frac{\alpha}{2}} \cdot\sqrt{h_{t}} - \eta_{t}\beta\right).
\end{align*}
Thus, we obtain the recursion: $h_{t + 1} \leq \left(1 - \eta_{t}\right)h_{t}$ for all $\eta_{t} \leq \min\left\{ \gamma\sqrt{\alpha}\sqrt{h_{t}}/\left(8\sqrt{2}\beta\right) , \alpha/\left(2\beta\right) \right\}$.

In particular, setting $\eta_{t}$ as stated in the theorem yields the stated convergence rate via a simple induction argument, given Lemma \ref{lem:strongset:recurs} (see appendix for a proof).

In order to prove the second part of the theorem, i.e., a linear convergence in the case that the gradients are bounded from below, we observe that plugging the bound $G$ on the magnitude of the gradients into the RHS of Eq. \eqref{T:StrongSet:2}, directly gives
\begin{align*}
	f\left(\X_{t + 1} , \Y_{t + 1}\right) & \leq \left(1 - \eta_{t}\right)f\left(\X_{t} , \Y_{t}\right) + \eta_{t}f\left(\X^{\ast} , \Y^{\ast}\right) \nonumber \\
	& - \eta_{t}\norm{\Z_{t} - \Z^{\ast}}^{2}\left(\frac{\alpha}{2} - \eta_{t}\beta\right) - \eta_{t}\norm{\W_{t} - \Y^{\ast}}^{2}\left(\frac{\gamma G}{8} - \eta_{t}\beta\right).
\end{align*}
Thus, for any $\eta_{t} \leq \min\left\{ \alpha/\left(2\beta\right) , \gamma G/\left(8\beta\right) \right\}$, by subtracting $f\left(\X^{\ast} , \Y^{\ast}\right)$ from both sides, we obtain $h_{t + 1} \leq \left(1 -\eta_{t}\right)h_{t}$. In particular, setting $\eta_{t} = \min \left\{ \alpha/\left(2\beta\right) , \gamma G/\left(8\beta\right) \right\}$ for all $t \geq 1$ and using elementary manipulations, gives the linear rate stated in the theorem.
\end{proof}
Before stating in details Theorem \ref{T:StrongReg} and proving it, we would like to prove that the additional assumption made in this result, i.e., that $\mR_{\mX}\left(\cdot\right)$ or $\mR_{\mY}\left(\cdot\right)$ is $\delta$-strongly convex, actually guarantees that the whole objective function $f\left(\cdot , \cdot\right)$ is also strongly convex. The following result is valid when the function $g$ is strongly convex with respect to the Euclidean norm. 
\begin{proposition} \label{P:fStrongConv}
Assume that $\mR_{\mX}$ or $\mR_{\mY}$ is $\delta$-strongly convex with respect to the Euclidean norm. Then, the objective function $f\left(\cdot , \cdot\right)$ of Problem \eqref{GeneralModel}, is $\tau$-strongly convex with respect to the Euclidean norm, where $\tau = \left(\delta + 2\alpha - \sqrt{\delta^{2} + 4\alpha^{2}}\right)/2$.
\end{proposition}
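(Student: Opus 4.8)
The plan is to establish strong convexity of $f$ via the convex-combination characterization, which (unlike the gradient-based Definition \ref{def:strongconvexfunc}) applies to the possibly non-differentiable regularizers: a function $h$ is $\tau$-strongly convex iff $h(\lambda u + (1-\lambda)v) \le \lambda h(u) + (1-\lambda) h(v) - \frac{\tau}{2}\lambda(1-\lambda)\norm{u-v}^2$ for all $u,v$ and $\lambda \in [0,1]$. Without loss of generality I would assume it is $\mR_{\mY}$ that is $\delta$-strongly convex, the case of $\mR_{\mX}$ being symmetric since $\X$ and $\Y$ enter $g$ only through their sum. Fixing two points $(\X_1,\Y_1),(\X_2,\Y_2) \in \vecspace\times\vecspace$ and $\lambda\in[0,1]$, and writing $\Delta\X := \X_1-\X_2$, $\Delta\Y := \Y_1-\Y_2$, I would add three inequalities at the convex combination: the $\alpha$-strong convexity of $g$ applied to the sum (contributing $-\frac{\alpha}{2}\lambda(1-\lambda)\norm{\Delta\X+\Delta\Y}^2$), the plain convexity of $\mR_{\mX}$, and the $\delta$-strong convexity of $\mR_{\mY}$ (contributing $-\frac{\delta}{2}\lambda(1-\lambda)\norm{\Delta\Y}^2$). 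Summing shows that $f$ obeys the $\tau$-strong-convexity inequality along this segment, for the Euclidean norm $\norm{(\X,\Y)}^2 = \norm{\X}^2+\norm{\Y}^2$ on the product space, provided the pointwise quadratic-form bound
\[
\alpha\norm{\Delta\X+\Delta\Y}^2 + \delta\norm{\Delta\Y}^2 \;\ge\; \tau\left(\norm{\Delta\X}^2 + \norm{\Delta\Y}^2\right)
\]
holds for all $\Delta\X,\Delta\Y \in \vecspace$.

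Expanding $\norm{\Delta\X+\Delta\Y}^2$ and rearranging, the task reduces to nonnegativity of $(\alpha-\tau)\norm{\Delta\X}^2 + 2\alpha\act{\Delta\X,\Delta\Y} + (\alpha+\delta-\tau)\norm{\Delta\Y}^2$. The key step is to collapse this form over $\vecspace\times\vecspace$ to a $2\times 2$ scalar condition. Since the cross-term coefficient $2\alpha$ is positive, Cauchy--Schwarz gives $\act{\Delta\X,\Delta\Y} \ge -\norm{\Delta\X}\norm{\Delta\Y}$, so with $s:=\norm{\Delta\X}$, $r:=\norm{\Delta\Y}\ge 0$ it suffices to show $(\alpha-\tau)s^2 - 2\alpha s r + (\alpha+\delta-\tau)r^2 \ge 0$, which is implied by positive semidefiniteness of $M = \left(\begin{smallmatrix}\alpha-\tau & -\alpha \\ -\alpha & \alpha+\delta-\tau\end{smallmatrix}\right)$. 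I would then verify $M\succeq 0$ at the stated $\tau$ through its leading entry and determinant: a direct computation gives $\det M = \tau^2 - (2\alpha+\delta)\tau + \alpha\delta$, with discriminant $(2\alpha+\delta)^2 - 4\alpha\delta = 4\alpha^2+\delta^2$, so its smaller root is exactly $\tau = (2\alpha+\delta-\sqrt{4\alpha^2+\delta^2})/2$. At this value $\det M = 0$, while $\alpha - \tau \ge 0$ and $\trace M = \sqrt{4\alpha^2+\delta^2}\ge 0$ (both from $\delta \le \sqrt{4\alpha^2+\delta^2}$); hence $M$ is PSD and moreover $\tau\ge 0$, so the bound holds and the proof concludes.

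The main obstacle I anticipate is the reduction of the quadratic form on $\vecspace\times\vecspace$ to the $2\times 2$ matrix criterion, in particular cleanly justifying that the Cauchy--Schwarz worst case governs the inequality and that PSD-ness of $M$ is the correct condition to check. Once that reduction is in place, identifying $\tau$ as the smaller root of $\det M$ is a routine computation, and the discriminant simplification to $4\alpha^2+\delta^2$ produces the claimed closed form directly. I would also remark in passing that $\tau\le\alpha$ (so the root $\tau_+ > \alpha$ is irrelevant and the leading entry $\alpha-\tau$ is automatically nonnegative), which is what makes the smaller root the right choice.
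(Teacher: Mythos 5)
Your proposal is correct and follows essentially the same route as the paper's proof: both establish the claim via the convex-combination characterization of strong convexity (summing the $\alpha$-strong convexity of $g$ on the sum, the plain convexity of one regularizer, and the $\delta$-strong convexity of the other) and then reduce to a pointwise quadratic-form inequality in $\norm{\Delta\X}$ and $\norm{\Delta\Y}$. The only difference is in how that last inequality is verified — the paper bounds the cross term with the parametrized Young inequality $\left(a+b\right)^{2} \geq \left(1-s\right)a^{2} + \frac{s-1}{s}b^{2}$ and optimizes over $s>0$, whereas you expand, apply Cauchy--Schwarz, and check positive semidefiniteness of a $2\times 2$ matrix whose determinant's smaller root gives $\tau$ — and both computations yield the identical constant.
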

\begin{proof}
Throughout the proof we let $\norm{\cdot}$ denote the Euclidean norm over $\vecspace$. Without the loss of generality we assume that $\mR_{\mX}$ is $\delta$-strongly convex. Let $\Q_{1} = \left(\X_{1} , \Y_{1}\right)$ and $\Q_{2} = \left(\X_{2} , \Y_{2}\right)$ be two points in $\vecspace \times \vecspace$ and $\lambda \in \left[0 , 1\right]$. Then, by the definition of strong convexity, we have that
\begin{equation*}
	\mR_{\mX}\left(\lambda\X_{1} + \left(1 - \lambda\right)\X_{2}\right) \leq \lambda\mR_{\mX}\left(\X_{1}\right) + \left(1 - \lambda\right)\mR_{\mX}\left(\X_{2}\right)  - \frac{\lambda\left(1 - \lambda\right)\delta}{2}\norm{\X_{1} - \X_{2}}^{2},
\end{equation*}
and
\begin{equation*}
	g\left(\P\right) \leq \lambda g\left(\X_{1} + \Y_{1}\right) + \left(1 - \lambda\right)g\left(\X_{2} + \Y_{2}\right)  - \frac{\lambda\left(1 - \lambda\right)\alpha}{2}\norm{\X_{1} + \Y_{1} - \X_{2} - \Y_{2}}^{2},
\end{equation*}
where $\P = \lambda\left(\X_{1} + \Y_{1}\right) + \left(1 - \lambda\right)\left(\X_{2} + \Y_{2}\right)$. On the other hand, for any $s > 0$, we have that
\begin{equation*}
	\norm{\X_{1} + \Y_{1} - \X_{2} - \Y_{2}}^{2} \geq \left(1 - s\right)\norm{\X_{1} - \X_{2}}^{2} + \frac{s - 1}{s}\norm{\Y_{1} - \Y_{2}}^{2},
\end{equation*}
where we have used the fact that $\left(a + b\right)^{2} \geq \left(1 - s\right)a^{2} + \left(s - 1\right)b^{2}/s$ for all $a , b \in \reals$, and that the norm is the Euclidean norm. Combining all these facts and using the fact that $\mR_{\mX}\left(\cdot\right)$ is convex yields that
\begin{align*}
	f\left(\lambda\Q_{1} + \left(1 - \lambda\right)\Q_{2}\right) & = g\left(\P\right) + \mR_{\mX}\left(\lambda\X_{1} + \left(1 - \lambda\right)\X_{2}\right) + \mR_{\mY}\left(\lambda\Y_{1} + \left(1 - \lambda\right)\Y_{2}\right) \\
	& \leq \lambda f\left(\Q_{1}\right) + \left(1 - \lambda\right)f\left(\Q_{2}\right) - \frac{\lambda\left(1 - \lambda\right)\delta}{2}\norm{\X_{1} - \X_{2}}^{2} \\
	& - \frac{\lambda\left(1 - \lambda\right)\alpha\left(1 - s\right)}{2}\norm{\X_{1} - \X_{2}}^{2} - \frac{\lambda\left(1 - \lambda\right)\alpha\left(s - 1\right)}{2s}\norm{\Y_{1} - \Y_{2}}^{2} \\
	& = \lambda f\left(\Q_{1}\right) + \left(1 - \lambda\right)f\left(\Q_{2}\right) - \frac{\lambda\left(1 - \lambda\right)}{2}\left(\delta + \alpha\left(1 - s\right)\right)\norm{\X_{1} - \X_{2}}^{2} \\
	& - \frac{\lambda\left(1 - \lambda\right)\alpha\left(s - 1\right)}{2s}\norm{\Y_{1} - \Y_{2}}^{2} \\
	& \leq \lambda f\left(\Q_{1}\right) + \left(1 - \lambda\right)f\left(\Q_{2}\right) - \frac{\lambda\left(1 - \lambda\right)\tau(s)}{2}\norm{\Q_{1} - \Q_{2}}^{2},
\end{align*}
where $\tau(s) = \min\left\{ \delta + \alpha\left(1 - s\right) , \alpha\left(s - 1\right)/s \right\}$ and the last inequality follows from the definitions of $\Q_{1}$ and $\Q_{2}$. It is easy to check that $\tau(s)$ gets its maximum with respect to $s$ when $s = \left(\delta + \sqrt{\delta^{2} + 4\alpha^{2}}\right)/\left(2\alpha\right)$. Therefore we get that $f\left(\cdot , \cdot\right)$ is strongly convex with parameter $\left(\delta + 2\alpha - \sqrt{\delta^{2} + 4\alpha^{2}}\right)/2$.
\end{proof}
Thanks to Proposition \ref{P:fStrongConv}, Problem \eqref{GeneralModel} becomes an unconstrained minimization of a strongly convex function. Therefore, we can expect to achieve a linear rate of convergence of Algorithm \ref{alg:GCG} as we prove below. We now state first Theorem \ref{T:StrongReg} in full details and then prove it.
\begin{theorem}[Linear convergence rate]
Assume that $\mR_{\mY}\left(\cdot\right)$ is $\delta$-strongly convex. Let $\left\{ \left(\X_{t} , \Y_{t}\right) \right\}_{t \geq 1}$ be a sequence produced by Algorithm \ref{alg:GCG} using the fixed step-size $\eta_{t} = \min\{ \alpha , \delta \}/\left(2\beta\right)$ for all $t \geq 1$. Then, for all $t\geq 1$, we have that 
\begin{equation*}
	f\left(\X_{t} , \Y_{t}\right) - f^{\ast} \leq \left(f\left(\X_{1} , \Y_{1}\right) - f^{\ast}\right)\cdot \exp\left(-\frac{\min\left\{ \alpha , \delta \right\}}{2\beta}\left(t - 1\right)\right).
\end{equation*}
\end{theorem}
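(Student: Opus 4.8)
The plan is to exploit Proposition~\ref{P:fStrongConv}: since $\mR_{\mY}(\cdot)$ is $\delta$-strongly convex and $g(\cdot)$ is $\alpha$-strongly convex with respect to the Euclidean norm, the full objective $f(\cdot,\cdot)$ is $\tau$-strongly convex in $(\X,\Y)$ for $\tau = (\delta + 2\alpha - \sqrt{\delta^2 + 4\alpha^2})/2$. This converts Problem~\eqref{GeneralModel} into an effectively strongly convex (though constraint-free in the indicator sense) minimization, for which a geometric rate is the natural target. The engine of the argument should again be Proposition~\ref{P:Tech}, specialized to the present setting, combined with inequality~\eqref{ie:largegrad} relating the gradient norm to the objective gap.

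First I would start from the inequality~\eqref{P:Tech:1} of Proposition~\ref{P:Tech}. The key new move compared to Theorem~\ref{T:StrongSet} is how the term $\mR_{\mY}(\W_t)$ and the inner product are controlled: here, instead of a strongly convex \emph{set}, we have a strongly convex \emph{regularizer} $\mR_{\mY}$. From the optimality of $\W_t$ (step~4 of Algorithm~\ref{alg:GCG}) and the $\delta$-strong convexity of $\mR_{\mY}$, I expect to get a strengthened linear-minimization inequality of the form
\begin{equation*}
	\mR_{\mY}(\W_t) + \act{\W_t, \nabla g(\Z_t)} \leq \mR_{\mY}(\Y^{\ast}) + \act{\Y^{\ast}, \nabla g(\Z_t)} - \frac{\delta}{2}\norm{\W_t - \Y^{\ast}}^2,
\end{equation*}
which yields an extra negative term $-\eta_t(\delta/2)\norm{\W_t - \Y^{\ast}}^2$ in the bound. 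Plugging this into~\eqref{P:Tech:1} together with the strong convexity of $g$ in the form $g(\Z_t) + \act{\Z^{\ast} - \Z_t, \nabla g(\Z_t)} \leq g(\Z^{\ast}) - (\alpha/2)\norm{\Z_t - \Z^{\ast}}^2$, and subtracting $f^{\ast}$, I anticipate arriving at a recursion
\begin{equation*}
	h_{t+1} \leq (1 - \eta_t)h_t - \eta_t\Bigl(\frac{\alpha}{2} - \eta_t\beta\Bigr)\norm{\Z_t - \Z^{\ast}}^2 - \eta_t\Bigl(\frac{\delta}{2} - \eta_t\beta\Bigr)\norm{\W_t - \Y^{\ast}}^2,
\end{equation*}
where $h_t := f(\X_t,\Y_t) - f^{\ast}$. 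The two quadratic penalty terms are precisely what the strong convexity of $g$ and of $\mR_{\mY}$ purchase, and they are what let us discard the error incurred by using the conditional-gradient atom $\W_t$ rather than a proximal update.

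The decisive step is then the step-size choice $\eta_t = \min\{\alpha,\delta\}/(2\beta)$, which makes \emph{both} coefficients $(\alpha/2 - \eta_t\beta)$ and $(\delta/2 - \eta_t\beta)$ nonnegative simultaneously, so the two quadratic terms can simply be dropped and the recursion collapses to $h_{t+1} \leq (1 - \eta_t)h_t$ with a constant $\eta_t$. The geometric rate then follows by unrolling: $h_t \leq (1-\eta)^{t-1}h_1 \leq h_1 \exp(-\eta(t-1))$ using $1 - x \leq e^{-x}$. The main obstacle I foresee is establishing the strengthened inequality for $\W_t$ cleanly—one must verify that the minimizer of a $\delta$-strongly convex function linearized against $\nabla g(\Z_t)$ obeys the quadratic growth bound above, which is a standard consequence of strong convexity but requires care since $\mR_{\mY}$ is not assumed differentiable (so the argument should go through the defining inequality of strong convexity applied at $\W_t$ and $\Y^{\ast}$, not through first-order optimality in gradient form). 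Notably, although Proposition~\ref{P:fStrongConv} is cited to justify why a linear rate is expected, the actual recursion does not seem to need the explicit value of $\tau$; the two separate strong-convexity parameters $\alpha$ and $\delta$ enter directly, which is cleaner.
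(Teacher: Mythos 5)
Your proposal is correct and follows essentially the same route as the paper: the strengthened linear-minimization inequality for $\W_{t}$ from the $\delta$-strong convexity of $\mR_{\mY}$, plugged into Proposition \ref{P:Tech} together with the strong convexity of $g$, yields exactly the recursion $h_{t+1} \leq (1-\eta_{t})h_{t}$ for $\eta_{t} \leq \min\{\alpha,\delta\}/(2\beta)$, and the paper likewise invokes Proposition \ref{P:fStrongConv} only as motivation, not in the derivation. Your observation that the argument must go through the quadratic-growth consequence of strong convexity at the minimizer (rather than gradient-based optimality) is also how the paper handles the nondifferentiability of $\mR_{\mY}$.
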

\begin{proof}
Fix some iteration $t \geq 1$. Using the optimal choice of $\W_{t}$, from the strong convexity of $\W\ \rightarrow act{\W , \nabla g\left(\Z_{t}\right)} + mR_{\mY}\left(W\right)$, we have that
\begin{equation*}
	\act{\W_{t} , \nabla g\left(\Z_{t}\right)} + \mR_{\mY}\left(\W_{t}\right) \leq \act{\Y^{\ast} , \nabla g\left(\Z_{t}\right)} + \mR_{\mY}\left(\Y^{\ast}\right) - \frac{\delta}{2}\norm{\W_{t} - \Y^{\ast}}^{2}.
\end{equation*}
Plugging the above inequality into Proposition \ref{P:Tech}, we have that
\begin{align*}
	f\left(\X_{t + 1} , \Y_{t + 1}\right) & \leq \left(1 - \eta_{t}\right)f\left(\X_{t} , \Y_{t}\right) + \eta_{t}\left(g\left(\Z_{t}\right) + \mR_{\mX}\left(\X^{\ast}\right) + \mR_{\mY}\left(\W_{t}\right)\right)  \nonumber \\
	& + \eta_{t}\act{\X^{\ast} + \W_{t} - \Z_{t} , \nabla g\left(\Z_{t}\right)} + \eta_{t}^{2}\beta\left(\norm{\Z_{t} - \Z^{\ast}}^{2} +\norm{\W_{t} - \Y^{\ast}}^{2}\right) \\
	& \leq \left(1 - \eta_{t}\right)f\left(\X_{t} , \Y_{t}\right) + \eta_{t}\left(g\left(\Z_{t}\right) + \mR_{\mX}\left(\X^{\ast}\right) + \mR_{\mY}\left(\Y^{\ast}\right)\right)  \nonumber \\
	& + \eta_{t}\act{\Z^{\ast} - \Z_{t} , \nabla g\left(\Z_{t}\right)} + \eta_{t}^{2}\beta\norm{\Z_{t} - \Z^{\ast}}^{2} - \eta_{t}\norm{\W_{t} - \Y^{\ast}}^{2}\left(\frac{\delta}{2} - \eta_{t}\beta\right) \\
	& \underset{(a)}{\leq} \left(1 - \eta_{t}\right)f\left(\X_{t} , \Y_{t}\right) + \eta_{t}\left(g\left(\Z^{\ast}\right) + \mR_{\mX}\left(\X^{\ast}\right) + \mR_{\mY}\left(\Y^{\ast}\right)\right)  \nonumber \\
	& - \eta_{t}\norm{\Z_{t} - \Z^{\ast}}^{2}\left(\frac{\alpha}{2} - \eta_{t}\beta\right) - \eta_{t}\norm{\W_{t} - \Y^{\ast}}^{2}\left(\frac{\delta}{2} - \eta_{t}\beta\right) \\
	& = \left(1 - \eta_{t}\right)f\left(\X_{t} , \Y_{t}\right) + \eta_{t}f\left(\X^{\ast} , \Y^{\ast}\right) \\
	& - \eta_{t}\norm{\Z_{t} - \Z^{\ast}}^{2}\left(\frac{\alpha}{2} - \eta_{t}\beta\right) - \eta_{t}\norm{\W_{t} - \Y^{\ast}}^{2}\left(\frac{\delta}{2} - \eta_{t}\beta\right),
\end{align*}
where (a) follows from the strong convexity of $g\left(\cdot\right)$, which implies that 
\begin{equation*}
	\act{\Z^{\ast} - \Z_{t} , \nabla g\left(\Z_{t}\right)} \leq g\left(\Z^{\ast}\right) - g\left(\Z_{t}\right) - \frac{\alpha}{2}\norm{\Z_{t} - \Z^{\ast}}^{2}.
\end{equation*}
Thus, subtracting $f\left(\X^{\ast} , \Y^{\ast}\right)$ from both sides and using the notation $h_{t} := f\left(\X_{t} , \Y_{t}\right) - f\left(\X^{\ast} , \Y^{\ast}\right)$, we conclude that
\begin{align*}
	\forall \,\, 0 < \eta_{t} \leq \frac{\min \left\{ \alpha , \delta \right\}}{2\beta} : \qquad h_{t +  1} \leq \left(1 - \eta_{t}\right)h_{t}.
\end{align*}
The theorem now follows from choosing $\eta_{t} = \min\{ \alpha , \delta \}/\left(2\beta\right)$ and using standard manipulations.
\end{proof}

\section{Numerical Results}
In this section we present evidence for the empirical performance of our Algorithm \ref{alg:GCG} on the Robust PCA problem in the constrained formulation given in Problem \eqref{eq:robustPCA}. Focusing on first-order methods that are scalable to high-dimensional problems involving optimization problems with a nuclear-norm constraint, we compare our method to other competing projection-free first-order methods that avoid using high-rank SVD computations.

We tested our Algorithm \ref{alg:GCG} and compare to two methods: the standard conditional gradient method and the conditional gradient variant proposed in \cite{mu2016scalable}, which adds an additional proximal step to update the sparse component $\Y$, on top of the standard CG method. See Table \ref{table:algs} for description of the algorithms.

\begin{table*}\renewcommand{\arraystretch}{1.3}
{\small
\begin{center}{\footnotesize
  \begin{tabular}{| l | p{9cm} |} \hline
abbv. & description \\ \hline
ALT-CGPG & Algortihm \ref{alg:GCG}, sparse component $\Y$ updated via CG, low-rank component $\X$ updated via low-rank PG (see Section \ref{sec:lowranksvd} for details) \\ \hline
ALT-PGCG & Algortihm \ref{alg:GCG}, sparse component $\Y$ updated via PG, low-rank component $\X$ updated via CG \\ \hline
CGCG & Both blocks $\X$ and $\Y$ updated via CG \\ \hline
CGCG-P & Algorithm FW-P from \cite{mu2016scalable} - both $\X,\Y$ updated via CG, followed by an additional PG update to sparse component $\Y$ only \\ \hline
  \end{tabular}
  \caption{Description of the tested algorithms. CG is a short notation for a conditional gradient update, and PG is a short notation for a proximal gradient update.}
  \label{table:algs}}
\end{center}}
\vskip -0.2in
\end{table*}\renewcommand{\arraystretch}{1}

For all algorithms, we apply a line-search procedure to compute the optimal convex combination taken in the conditional gradient-like step on each iteration. This implementation is straightforward for the standard conditional gradient method (CGCG) and the proposed variant of \cite{mu2016scalable} (CGCG-P). For our proposed methods (ALT-CGPG, ALT-PGCG), we set the step-size for the proximal gradient update on each iteration $t$ to $\eta_t = \frac{2}{t+1}$ (this seems as a very good and practical approximation to the choice in Theorem \ref{thm:minDiam:detail} once we neglect the first short phase with a constant step-size, and start immediately with the second regime of step-sizes). Once we have computed the proximal update with this step-size (Line 5 of Algorithm \ref{alg:GCG}), we use a line-search to set the optimal convex combination parameter (used in Line 6 of Algorithm \ref{alg:GCG}). It is straightforward to argue that performing such a line-search instead of using the pre-fixed value of $\eta_t$ used for the proximal update, does not hurt any of the guarantees specified in Theorems \ref{thm:minDiam}, \ref{T:StrongSet}, \ref{T:StrongReg}, but can be quite important from a practical point of view.

\subsection{Experiments}
We generate synthetic data as follows. We set the dimensions in all experiments to $m=n=1000$. We generate the low-rank component by taking $\L = 10\U\V^{\top}$, where $\U$ is $m \times r$ and $\V$ is $n \times r$, where $r$ varies, and the entries of $\U$ and $\V$ are i.i.d. standard Gaussian random variables. The sparse component $\S$ is generated by setting $\S = 10\N$, where $\N$ is a matrix with i.i.d. standard Gaussian entries, and each entry in $\S$ is set to zero with probability $1 - p$ (independently of all other entries), where $p$ varies. We then set the observed matrix to $\M = \L + \S$. For each value of $(r,p)$ we have generated 15 i.i.d. experiments. See Table \ref{table:tests} for a quick summary. 

\begin{table*}\renewcommand{\arraystretch}{1.3}
{\small
\begin{center}{\footnotesize
  \begin{tabular}{| c | c | c | c | c | c |} \hline
config. & fig. & rank of $\L$ ($r$)& sampling freq. in $\S$ ($p$)& avg. $\norm{\L}_{\ast}$ & avg. $\norm{\S}_{1}$ \\ \hline
1 & 1& 5  & 0.001& 4.9926e+04 & 7.9669e+04 \\ \hline
2 & 1 & 5  & 0.003 & 4.9682e+04 & 2.3806e+05 \\ \hline
3 & 1 & 25  &  0.001 & 2.4872e+05 & 7.9837e+04 \\ \hline
4 & 1 & 25  & 0.003 & 2.4853e+05 & 2.3675e+05 \\ \hline
5 & 2 &25 &  0.03 & 2.4830e+05 & 2.3914e+06 \\ \hline
6 & 2  & 130 & 0.01 & 1.2589e+06 & 7.9836e+05 \\ \hline
  \end{tabular}
  \caption{Description of data used. The forth and fifth columns give the values  $\norm{\L}_{\ast}$ and $\norm{\S}_{1}$ averaged over the $15$ i.i.d. experiments.}
  \label{table:tests}}
\end{center}}
\vskip -0.2in
\end{table*}\renewcommand{\arraystretch}{1}

All algorithms were used with the exact parameters $\tau = \norm{\L}_{\ast}$ and $s = \norm{\S}_{1}$, and with the same initialization point $(\X_1,\Y_1)=(\mathbf{0},\mathbf{0})$. The algorithms were implemented in Matlab with the \texttt{svds} command used to compute the low-rank SVD updates. For our algorithm ALT-CGPG we have used a rank-$r$ SVD to compute the low-rank proximal update (were $r$ is the rank of the low-rank data component $\L$). For all experiments we measured the function value (as given in Problem \eqref{eq:robustPCA}) both as a function of the number of iterations executed and and the overall runtime.

From the results in Figure \ref{fig:5} and Figure \ref{fig:6} it is clear that in each one of the six scenarios tested (different values of $r$ and $p$), at least one of the variants of our Algorithm \ref{alg:GCG} (either ALT-CGPG or ALT-PGCG) clearly outperforms CGCG and CGCG-P. In particular, when examining the results and the corresponding norm bounds $\tau$ and $s$ (as recorded in Table \ref{table:tests}), we can see evidence for the improved complexity achieved in Theorem \ref{thm:minDiam}, which presents improved convergence bound in  low/high Signal-to-Noise regimes (see also discussion in Section \ref{sec:SNR}). We can see that, as a rule of thumb, indeed when $\norm{\S}_{1} >> \norm{\L}_{\ast}$, updating the sparse component $\Y$ via a proximal-gradient update in Algorithm \ref{alg:GCG}, results in significantly faster performances than when a conditional-gradient update is used. Similarly, when $\norm{\S}_{1} >> \norm{\L}_{\ast}$, we can see that updating the low-rank component $\X$ via a (low-rank) proximal-update in Algorithm \ref{alg:GCG}, results in significantly faster performances of our algorithm. Perhaps surprisingly, it also seems that the standard conditional gradient method (CGCG) outperforms the variant recently proposed in \cite{mu2016scalable} (CGCG-P), with configuration 5 ($r=25$, $p=0.03$) being the exception.

\begin{figure}
  \begin{subfigure}[t]{0.5\textwidth}
    \centering
    \includegraphics[width=\textwidth]{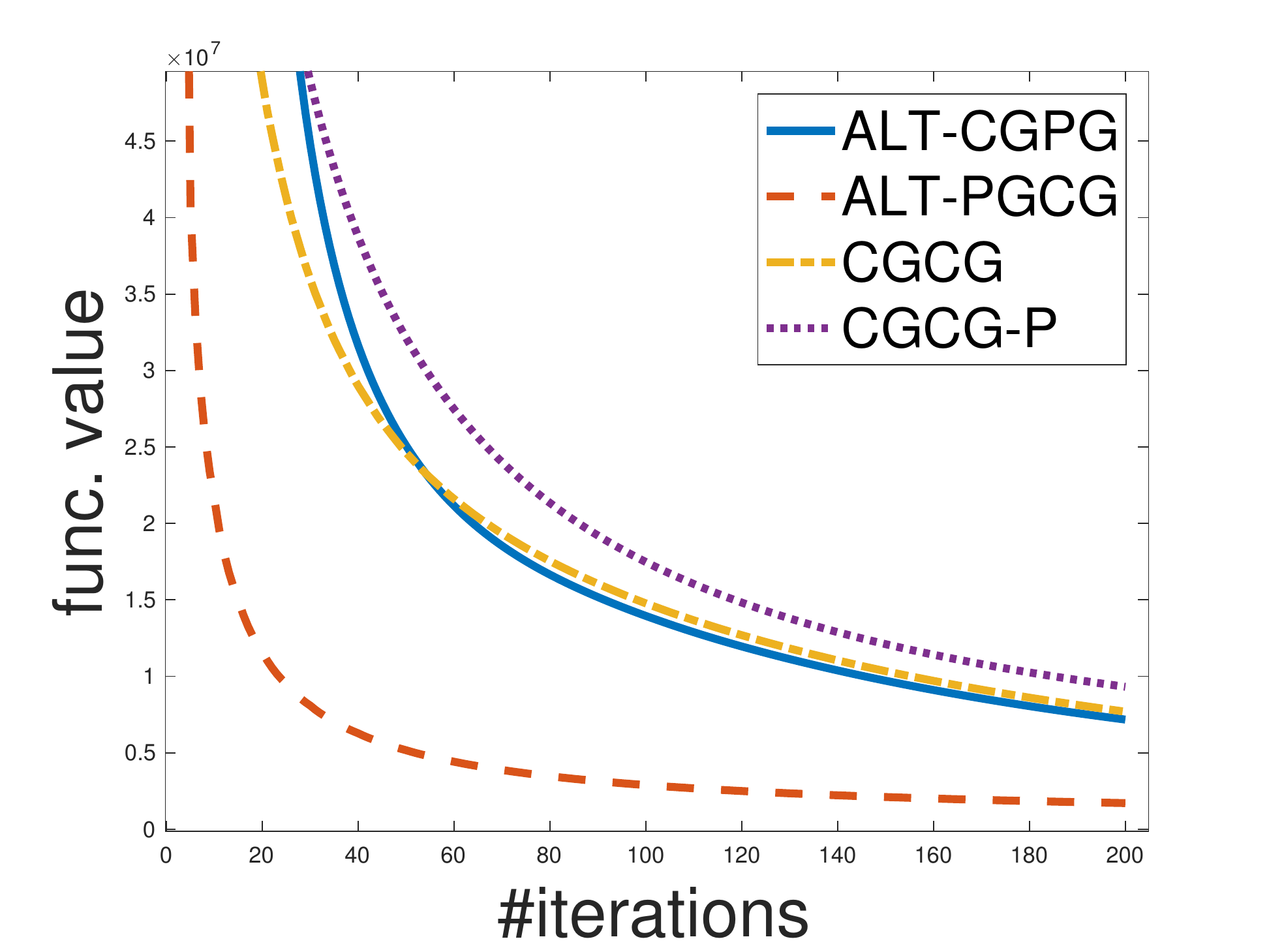}
     \caption{$r=5$ and $p = 0.001$}
  \end{subfigure}
  ~
  \begin{subfigure}[t]{0.5\textwidth}
    \centering
    \includegraphics[width=\textwidth]{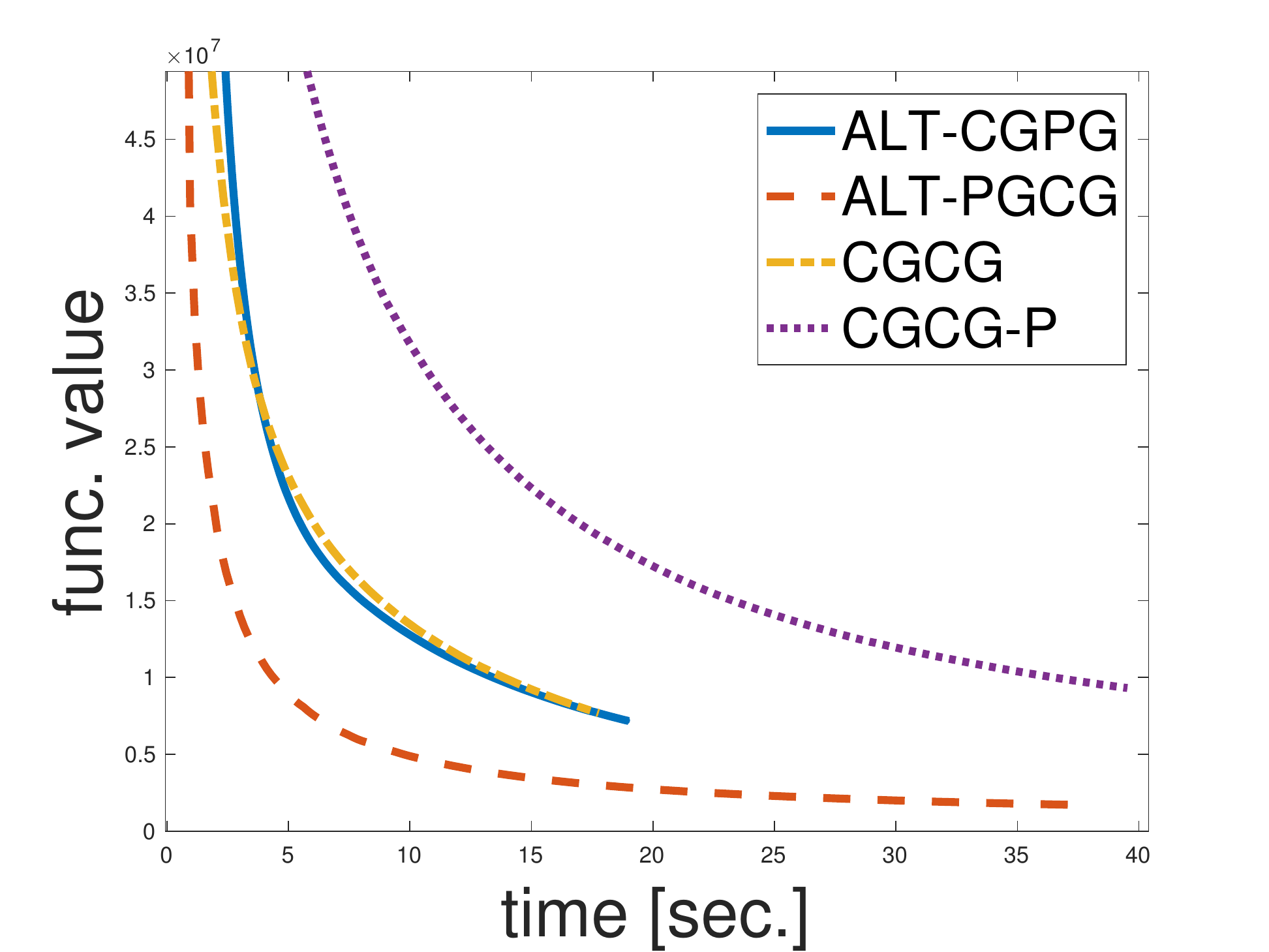}
   \caption{$r=5$ and $p = 0.001$}
  \end{subfigure}
  
  \begin{subfigure}[t]{0.5\textwidth}
    \centering
    \includegraphics[width=\textwidth]{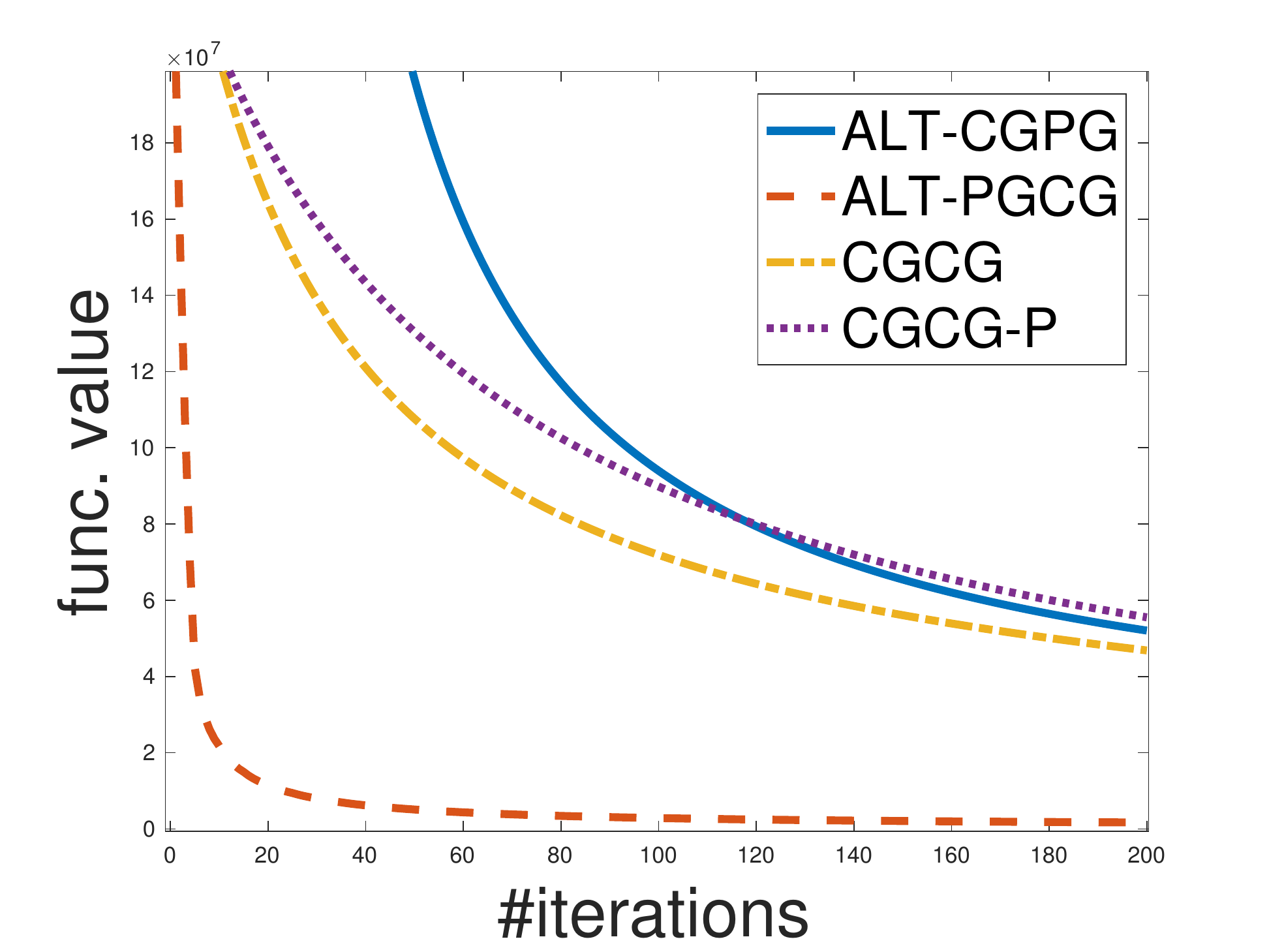}
 \caption{$r=5$ and $p = 0.003$}
  \end{subfigure}
  ~
  \begin{subfigure}[t]{0.5\textwidth}
    \centering
    \includegraphics[width=\textwidth]{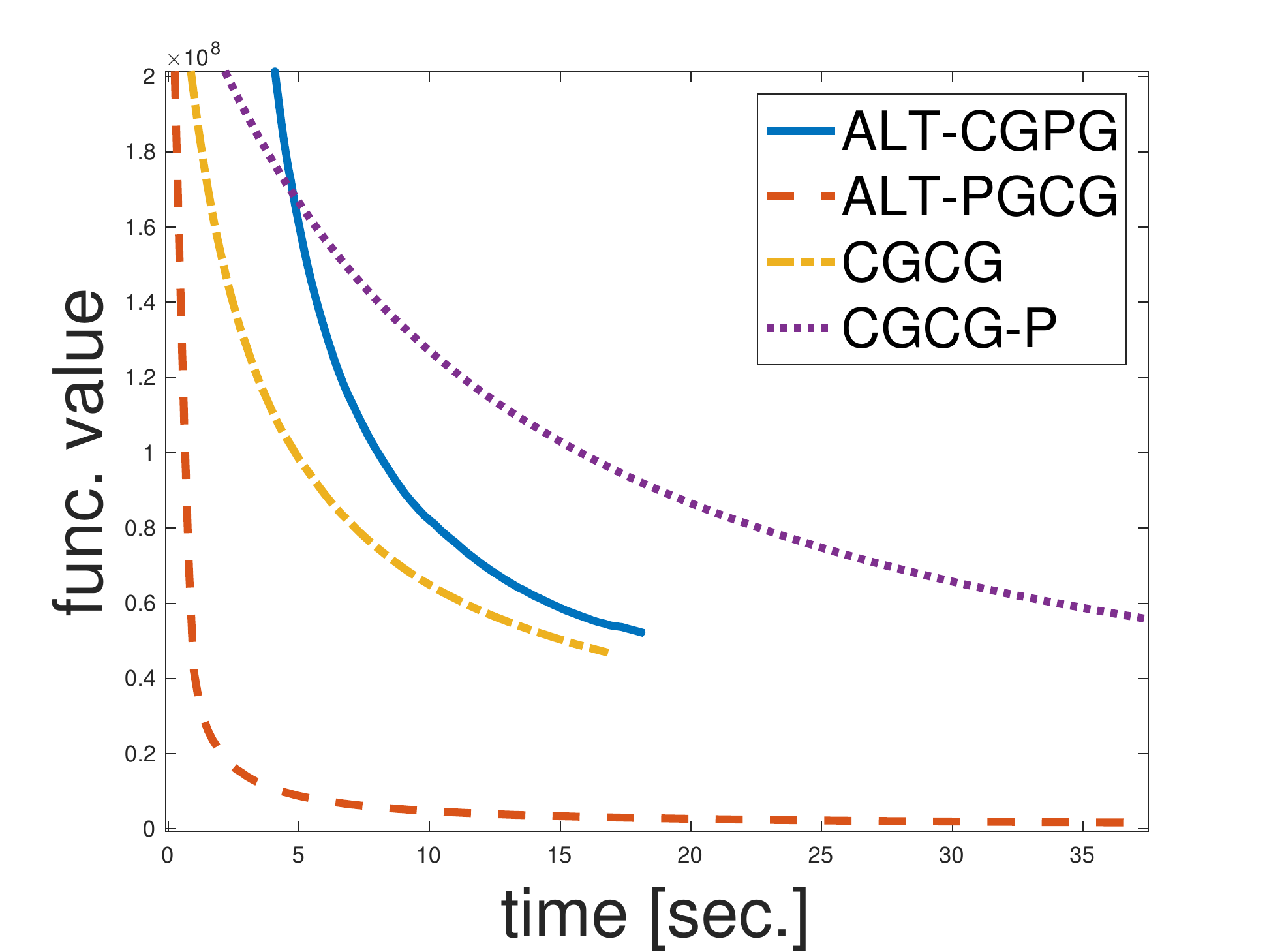}
 \caption{$r=5$ and $p = 0.003$}
  \end{subfigure}

  \begin{subfigure}[t]{0.5\textwidth}
    \centering
    \includegraphics[width=\textwidth]{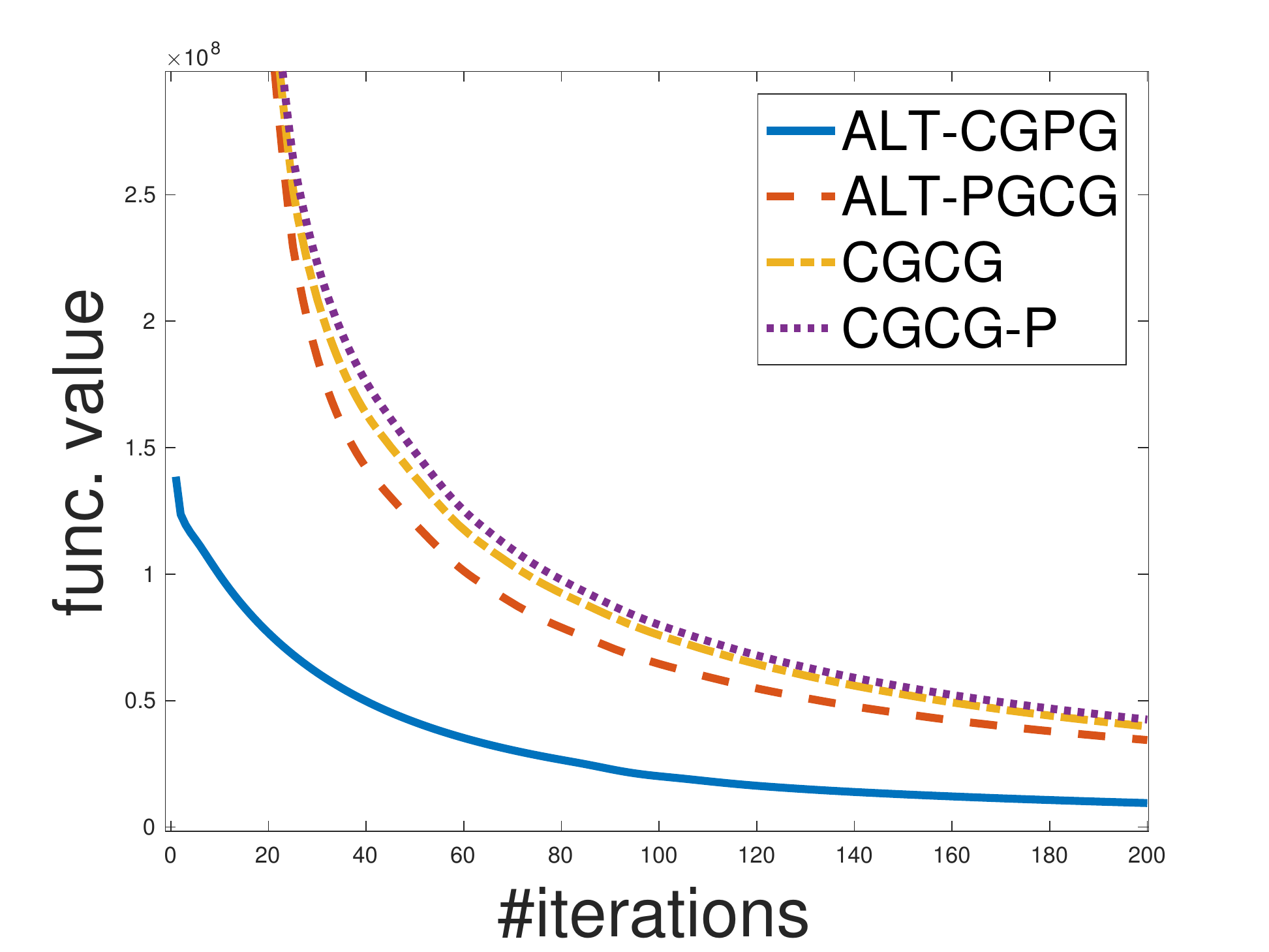}
  \caption{$r=25$ and $p = 0.001$}
  \end{subfigure}
  ~
  \begin{subfigure}[t]{0.5\textwidth}
    \centering
    \includegraphics[width=\textwidth]{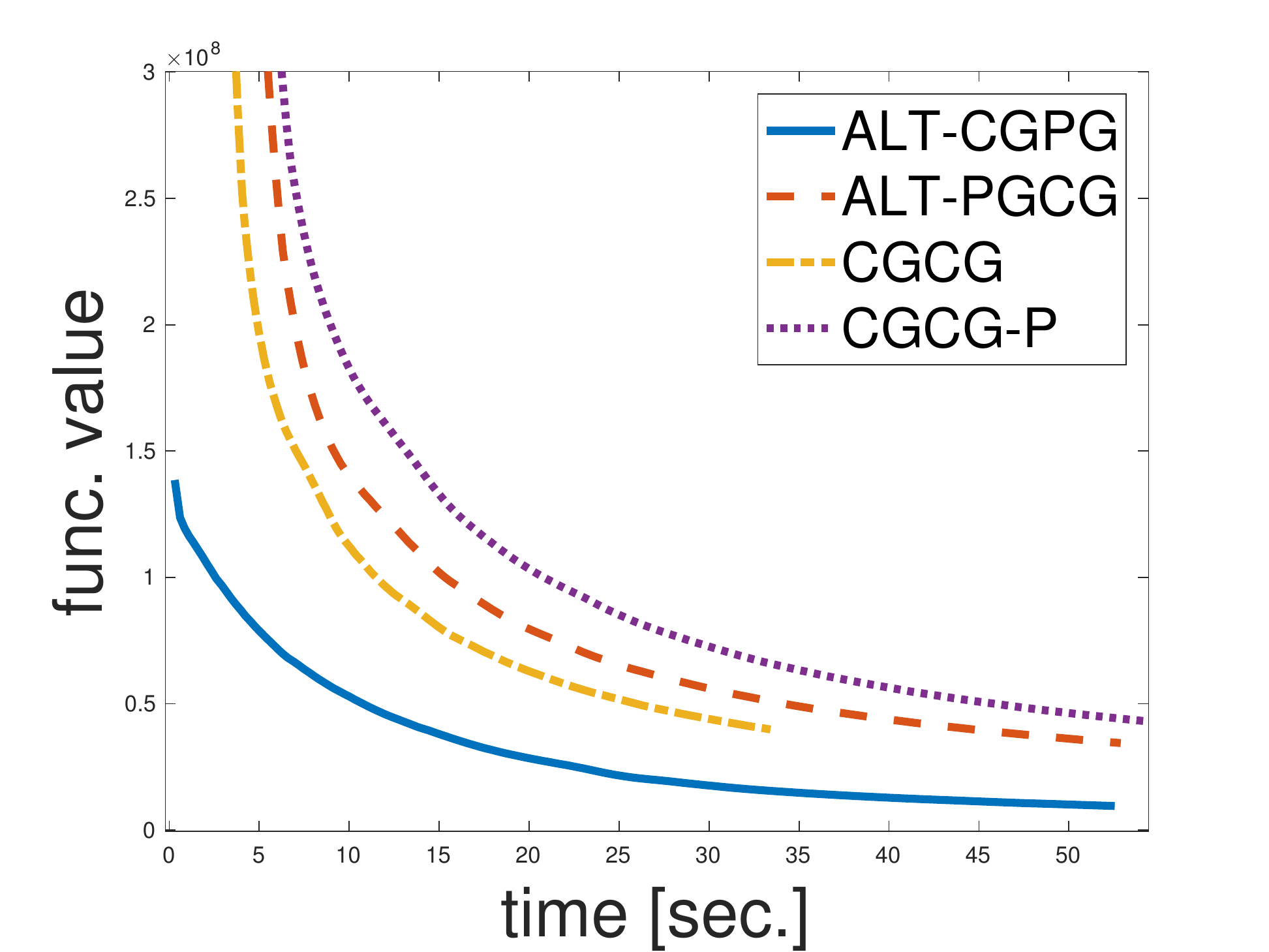}
  \caption{$r=25$ and $p = 0.001$}
  \end{subfigure}

  \begin{subfigure}[t]{0.5\textwidth}
    \centering
    \includegraphics[width=\textwidth]{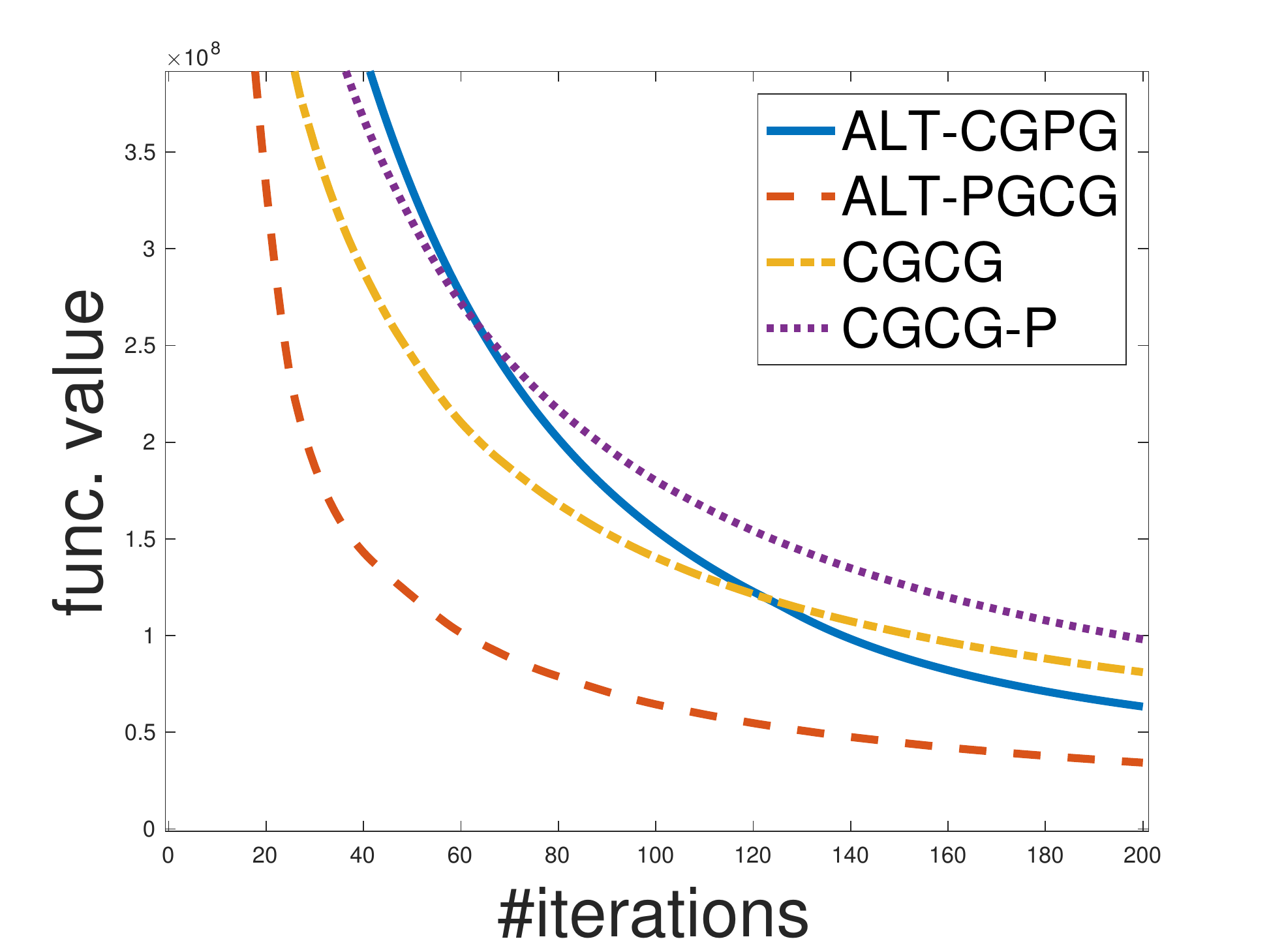}
      \caption{$r=25$ and $p = 0.003$}
  \end{subfigure}
  ~
  \begin{subfigure}[t]{0.5\textwidth}
    \centering
    \includegraphics[width=\textwidth]{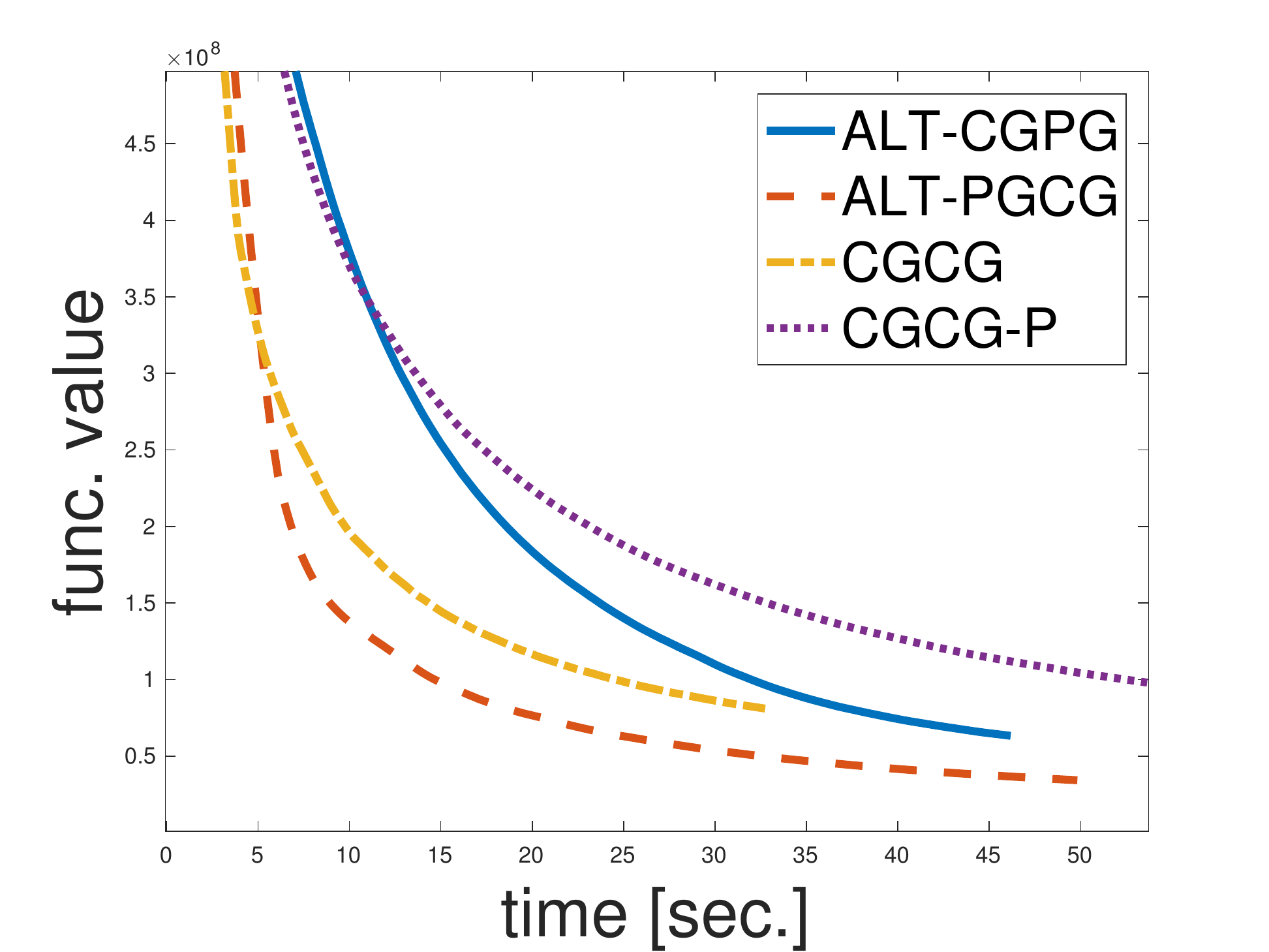}
          \caption{$r=25$ and $p = 0.003$}
  \end{subfigure}
  \caption{Results for configurations 1-4. Each configuration presents the average over  15 i.i.d. runs.}
  \label{fig:5}
\end{figure}

\begin{figure}
  \begin{subfigure}[t]{0.5\textwidth}
    \centering
    \includegraphics[width=\textwidth]{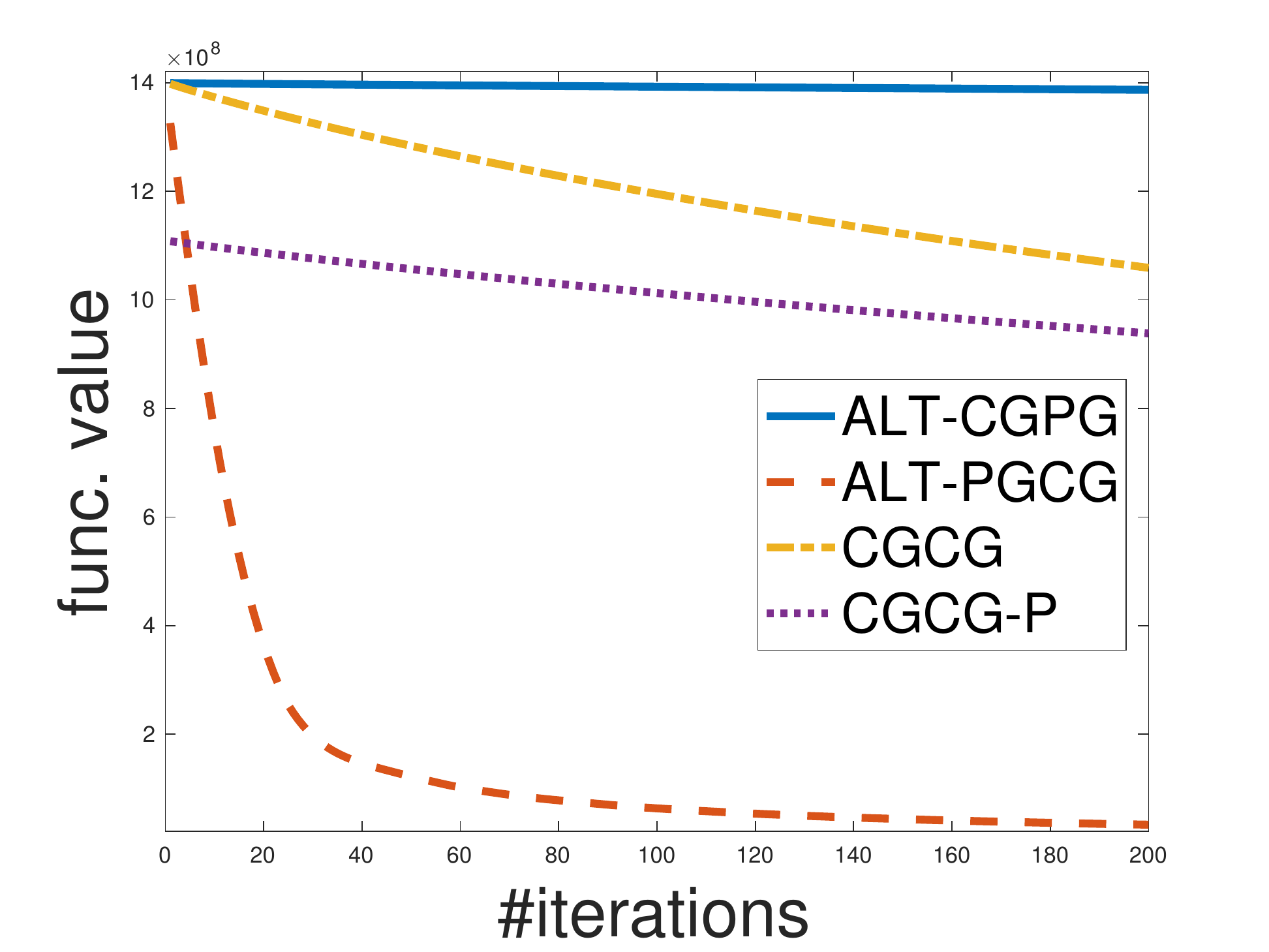}
  \caption{$r=25$ and $p = 0.03$}
  \end{subfigure}
  ~
  \begin{subfigure}[t]{0.5\textwidth}
    \centering
    \includegraphics[width=\textwidth]{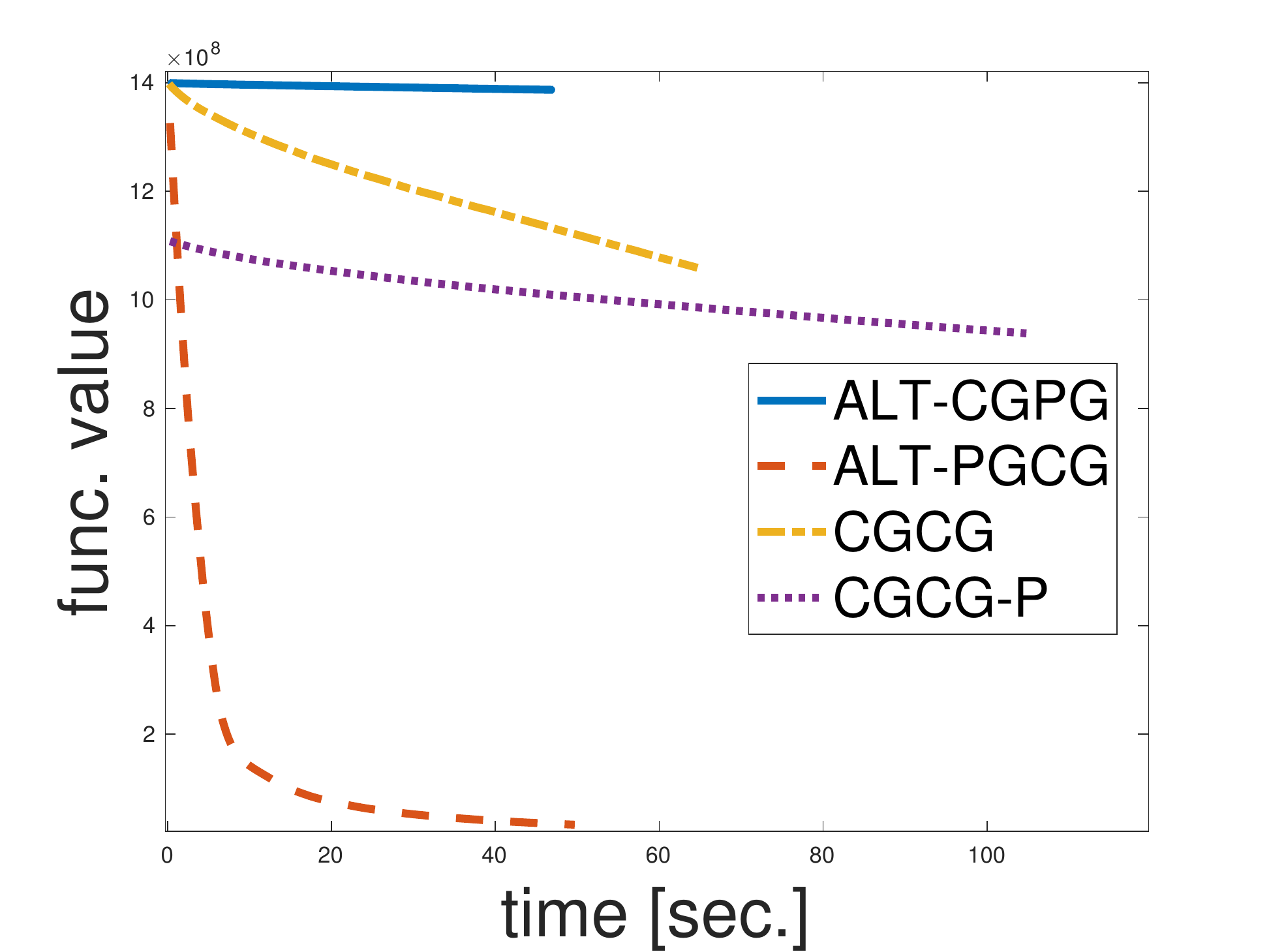}
  \caption{$r=25$ and $p = 0.03$}
  \end{subfigure}

  \begin{subfigure}[t]{0.5\textwidth}
    \centering
    \includegraphics[width=\textwidth]{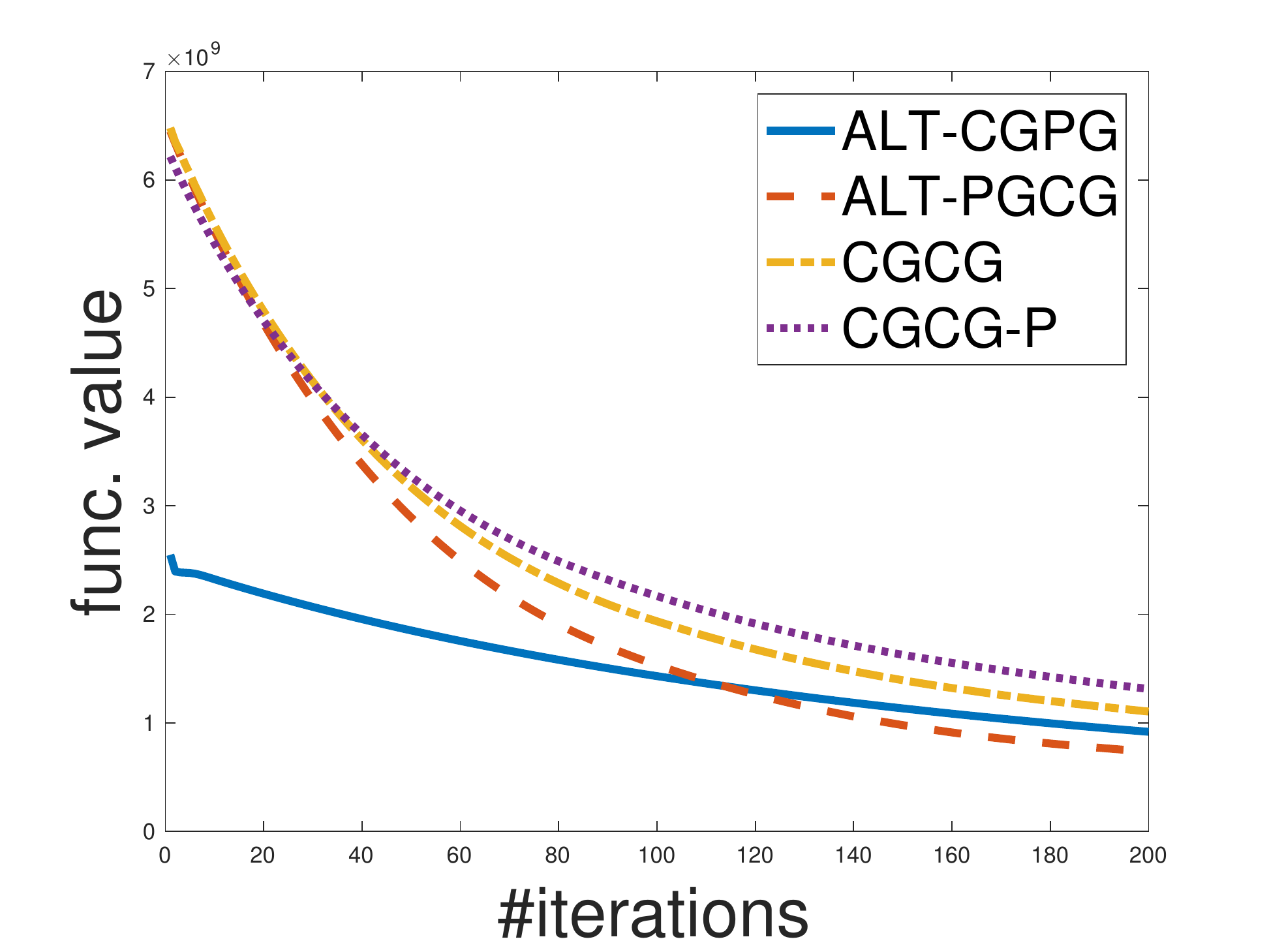}
      \caption{$r=130$ and $p = 0.01$}
  \end{subfigure}
  ~
  \begin{subfigure}[t]{0.5\textwidth}
    \centering
    \includegraphics[width=\textwidth]{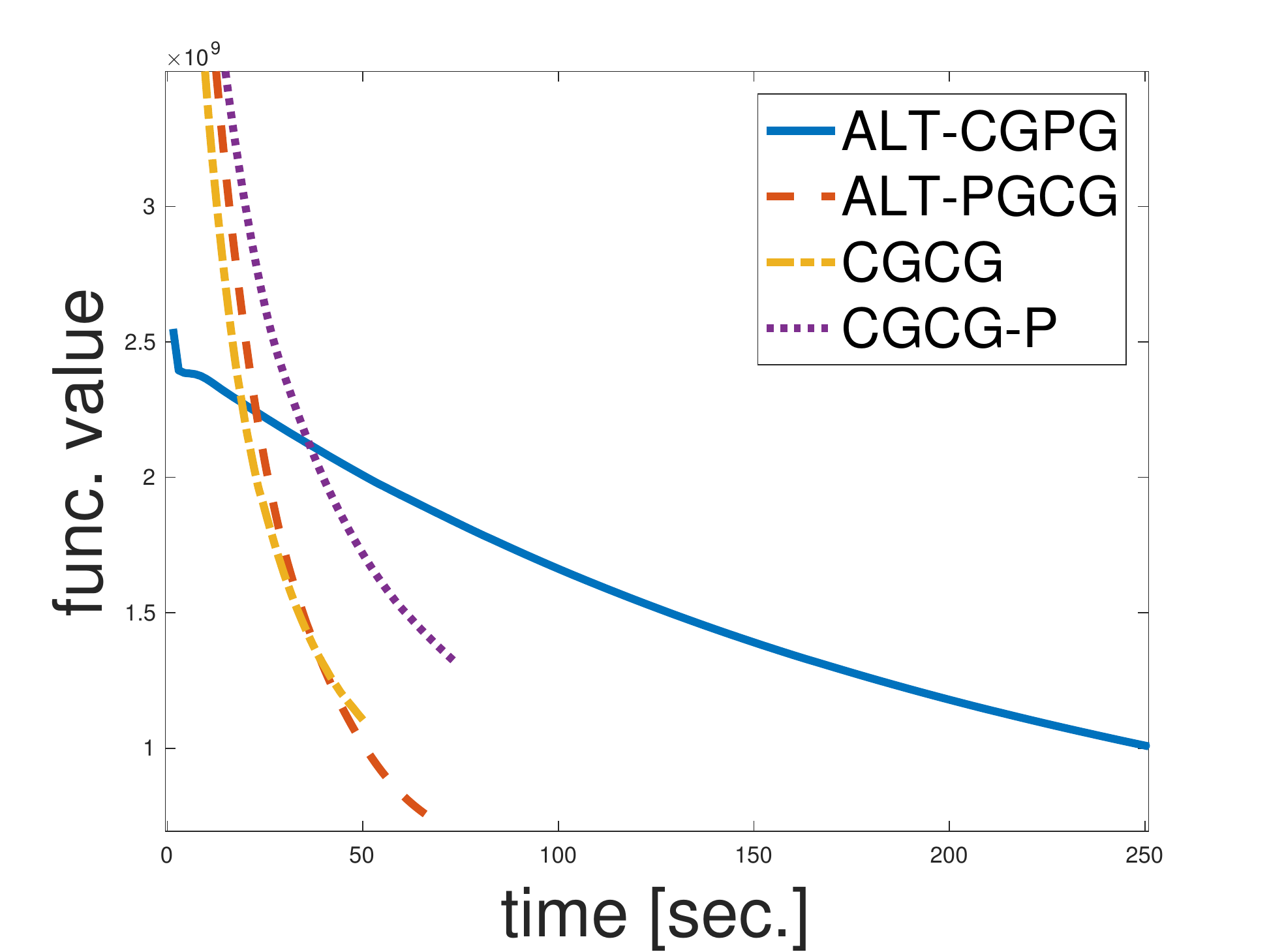}
          \caption{$r=130$ and $p = 0.01$}
  \end{subfigure}
  \caption{Results for configurations 5-6. Each configuration presents the average over  15 i.i.d. runs.}
  \label{fig:6}
\end{figure}

\paragraph{Acknowledgments} Dan Garber is supported by the ISRAEL SCIENCE FOUNDATION (grant No. 1108/18).

\section{Appendix}
\begin{lemma} \label{L:Reco}
Consider a sequence $\left\{ h_{t} \right\}_{t \geq 1} \subset \reals_{+}$ satisfying the recursion:
\begin{equation*}
	\forall \,\, t \geq 1 \, \forall \eta_{t} \in \left(0 , \frac{\alpha}{2\beta}\right] : \qquad h_{t + 1} \leq \left(1 - \eta_{t}\right)h_{t} + \eta_{t}^{2}\beta D_{\mY}^{2}.
\end{equation*}
Then, setting the step-size $\eta_{t}$ according to:
\begin{equation*}
	\eta_{t} = 
		\begin{cases}
        		\frac{\alpha}{2\beta} & \mbox{if }t \leq t_{0}, \\
        		\frac{2}{t - t_{0} + \frac{4\beta}{\alpha}} & \mbox{if }t > t_{0},
        	\end{cases}
\end{equation*} 
where $t_{0} := \max \left\{ 0 , \left\lceil{ 2\beta/\left(\alpha\right)\ln\left(2C/\left(\alpha D_{\mY}^{2}\right)\right)}\right\rceil \right\}$, for $C$ satisfying $C \geq h_{1}$, guarantees, for all $t \geq t_{0} + 1$ that
\begin{equation*}
	h_{t} \leq \frac{4\beta D_{\mY}^{2}}{t - t_{0} - 1 + \frac{4\beta}{\alpha}}.
\end{equation*}
\end{lemma}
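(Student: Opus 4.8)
The plan is to split the analysis along the two regimes of the prescribed step-size. In the first regime ($t \leq t_{0}$) the step-size is the constant $\eta_{t} = \alpha/(2\beta)$, so the recursion becomes a contraction with a fixed additive term; I would use this to show that $h_{t_{0}+1}$ is driven down to a specific small value. In the second regime ($t > t_{0}$) the step-size decays like $2/(t - t_{0} + 4\beta/\alpha)$, and I would run a clean induction to establish the stated $O(1/t)$ bound, using the endpoint of phase one as the base case.

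For the first phase, set $q := 1 - \alpha/(2\beta)$ and $b := (\alpha/(2\beta))^{2}\beta D_{\mY}^{2} = \alpha^{2}D_{\mY}^{2}/(4\beta)$, so the recursion reads $h_{t+1} \leq q\,h_{t} + b$. Iterating and summing the geometric series gives $h_{t_{0}+1} \leq q^{t_{0}}h_{1} + b/(1-q)$. The fixed point is $b/(1-q) = \alpha D_{\mY}^{2}/2$, and since $1 - x \leq \exp(-x)$ yields $q^{t_{0}} \leq \exp(-(\alpha/(2\beta))\,t_{0})$, the choice $t_{0} = \lceil (2\beta/\alpha)\ln(2C/(\alpha D_{\mY}^{2}))\rceil$ forces $q^{t_{0}} \leq \alpha D_{\mY}^{2}/(2C)$. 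Combined with $h_{1} \leq C$, this gives $h_{t_{0}+1} \leq \alpha D_{\mY}^{2}/2 + \alpha D_{\mY}^{2}/2 = \alpha D_{\mY}^{2}$. The key observation is that $\alpha D_{\mY}^{2}$ is exactly the target bound evaluated at $t = t_{0}+1$, where the denominator $t - t_{0} - 1 + 4\beta/\alpha$ equals $4\beta/\alpha$.

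For the second phase, I would prove by induction on $k \geq 1$ that $h_{t_{0}+k} \leq 4\beta D_{\mY}^{2}/(k - 1 + 4\beta/\alpha)$; the case $k = 1$ is the estimate just obtained. For the inductive step, substitute $\eta_{t_{0}+k} = 2/(k + 4\beta/\alpha)$ into the recursion. One first checks $\eta_{t_{0}+k} \leq \alpha/(2\beta)$ (which reduces to $0 \leq \alpha$, so the recursion is indeed applicable), then applies the inductive hypothesis. Writing $u := k + 4\beta/\alpha$, the desired inequality $(1-\eta_{t_{0}+k})\tfrac{4\beta D_{\mY}^{2}}{u-1} + \eta_{t_{0}+k}^{2}\beta D_{\mY}^{2} \leq \tfrac{4\beta D_{\mY}^{2}}{u}$ reduces, after dividing by $4\beta D_{\mY}^{2}$ and clearing denominators, to $u^{2} - u - 1 \leq u^{2} - u$, i.e. $-1 \leq 0$, which holds trivially.

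The only genuinely delicate point — the main obstacle to get right — is the calibration between the two phases: the constant-step-size phase must be run for precisely enough iterations that $h_{t_{0}+1}$ lands at or below $\alpha D_{\mY}^{2}$, which is exactly the base-case value demanded by the induction. This is what pins down the logarithmic expression for $t_{0}$. Once that matching is in place, both the geometric-series estimate of phase one and the telescoping induction of phase two are routine.
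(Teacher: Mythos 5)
Your proposal is correct and follows essentially the same two-phase argument as the paper: a geometric-decay estimate with constant step $\alpha/(2\beta)$ driving $h_{t_0+1}$ down to $\alpha D_{\mY}^{2}$ (the paper works with the normalized quantity $v_t = h_t/(2\beta D_{\mY}^2)$ and the equivalent threshold $\alpha/(2\beta)$, but this is cosmetic), followed by the same induction on the decaying step-size, with only the final algebraic verification arranged slightly differently.
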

\begin{proof}
Let us define $v_{t} := h_{t}/\left(2\beta D_{\mY}^{2}\right)$ for all $t \geq 1$. Dividing both sides of the recursion in the lemma by $2\beta D_{\mY}^{2}$, we obtain the recursion
\begin{equation}\label{eq:mindiam:1}
	\forall \,\, 0 < \eta_{t} \leq \frac{\alpha}{2\beta} : \qquad v_{t + 1} \leq \left(1 - \eta_{t}\right)v_{t} + \frac{\eta_{t}^{2}}{2}.
\end{equation}
Let $C , t_{0}$ and $\left\{ \eta_{t} \right\}_{t \geq 1}$ be as defined in the lemma, and recall that $\eta_{t} = \alpha/\left(2\beta\right)$, for all $t \leq t_{0}$. Using Eq. \eqref{eq:mindiam:1}, we have that
\begin{align*}
	v_{t_{0} + 1} & \leq \left(1 - \eta_{0}\right)^{t_{0}}v_{1} + \frac{\eta_{0}^{2}}{2}\sum_{t = 1}^{t_{0}} \left(1 - \eta_{0}\right)^{t - 1} \leq \frac{C}{2\beta D_{\mY}^{2}} \cdot e^{-\eta_{0}t_{0}} + \frac{\eta_{0}^{2}}{2} \cdot \frac{1}{\eta_{0}} \\
	& = \frac{C}{2\beta D_{\mY}^{2}} \cdot e^{-\frac{\alpha t_{0}}{2\beta}} + \frac{\alpha}{4\beta}.
\end{align*}
Thus, for $t_{0} := \max \left\{ 0 , \left\lceil{ 2\beta/\left(\alpha\right)\ln\left(2C/\left(\alpha D_{\mY}^{2}\right)\right)}\right\rceil \right\}$, we obtain that $v_{t_{0} + 1} \leq \alpha/\left(2\beta\right)$.

We now show that for all $t \geq t_{0} + 1$, it holds that $v_{t} \leq 2/\left(t - t_{0} + 1 + c_{0}\right)$ for $c_{0} := 4\beta/\alpha - 2$. For the base case $t = t_{0} + 1$, we indeed have already showed that $v_{t_{0} + 1} \leq \alpha/\left(2\beta\right)$, as needed. Note that using the step-size choice $\eta_{t} :=  2/\left(t - t_{0} + 2 + c_{0}\right)$ for all $t \geq t_{0} + 1$, as defined in the lemma, we have that $\eta_{t} \leq \eta_{t_{0} + 1} = 2/\left(3 + c_{0}\right) < 2/c_{0} = \alpha/\left(2\beta\right)$, and hence we can apply the recursion \eqref{eq:mindiam:1} for all $t \geq t_{0} + 1$. Thus, assuming the induction holds for some $t \geq t_0+1$, using the recursion \eqref{eq:mindiam:1}, the induction hypothesis, and our step-size choice, we have that
\begin{align*}
	v_{t + 1} & \leq v_{t}\left(1 - \eta_{t}\right) + \frac{\eta_{t}^{2}}{2} \leq \frac{2}{t - t_{0} + 1 + c_{0}}\left({1 - \frac{2}{t - t_{0} + 2 + c_{0}}}\right) \\
	& + \frac{2}{\left(t - t_{0} + 2 + c_{0}\right)^{2}} \\
	& = \frac{2}{t - t_{0} + 2 + c_{0}}\left({1 + \frac{1}{t - t_{0} + 1 + c_{0}}}\right)\left({1 - \frac{2}{t - t_{0} + 2 + c_{0}}}\right) \\
	& + \frac{2}{\left(t - t_{0} + 2 + c_{0}\right)^{2}} \\
	& = \frac{2}{t - t_{0} + 2 + c_{0}}\left(1 + \frac{\left(t - t_{0} + 2 + c_{0}\right) - 2\left(t -t_{0} + 1 + c_{0}\right) - 2 + \left(t - t_{0} + 1 + c_{0}\right)}{\left(t - t_{0} + 1 + c_{0}\right)\left(t - t_{0} + 2 + c_{0}\right)}\right) \\
 	& = \frac{2}{t - t_{0} + 2 + c_{0}}\left(1 - \frac{1}{\left(t - t_{0} + 1 + c_{0}\right)\left(t -t_{0} + 2 + c_{0}\right)}\right) \\
	& < \frac{2}{t - t_{0} + 2 + c_{0}}.
\end{align*}
Hence, the induction implies that $v_{t} \leq 2/\left(t - t_{0} - 1 + 4\beta/\alpha\right)$ for all $t \geq t_{0} + 1$. The proof is completed by recalling that $h_{t} = 2\beta D_{\mY}^{2}v_{t}$.
\end{proof}
\begin{lemma}\label{lem:strongset:recurs}
Consider a sequence $\left\{ h_{t} \right\}_{t \geq 1} \subset \reals_{+}$ satisfying the recursion:
\begin{equation*}
	\forall t \geq 1 : \qquad h_{t+1} \leq h_{t} - \eta_{t}h_{t},
\end{equation*}
where $0 < \eta_{t} \leq \min\{ c_{1}\sqrt{h_{t}} , c_{2} \}$, $c_{1} > 0$ and $0 < c_{2} \leq 1$. Then, setting $\eta_{t} = 3/\left(t - 1 + 3c_{2}^{-1}\right)$ for all $t \geq 1$, yields that $h_{t} \leq 9\max\{ c_{1}^{-2} , c_{2}^{-2}h_{1} \}/\left(t - 1 + 3c_{2}^{-1}\right)^{2}$.
\end{lemma}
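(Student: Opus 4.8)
The plan is to recast the recursion in terms of the shorthand $a_t := t - 1 + 3c_2^{-1}$, so that the prescribed step-size is exactly $\eta_t = 3/a_t$, consecutive denominators satisfy $a_{t+1} = a_t + 1$, and (since $0 < c_2 \le 1$) $a_t \ge a_1 = 3c_2^{-1} \ge 3$ for every $t$. Writing $M := \max\{ c_1^{-2}, c_2^{-2}h_1 \}$, the target becomes $h_t \le 9M/a_t^2$, which I would establish by induction on $t$. The base case is immediate: $9M/a_1^2 = 9M/(9c_2^{-2}) = Mc_2^2 \ge h_1$, because $M \ge c_2^{-2}h_1$. It is also worth noting at the outset that one half of the admissibility requirement is automatic, since $\eta_t = 3/a_t \le 3/a_1 = c_2$ for all $t$; thus the constraint $\eta_t \le c_2$ never binds, and only the constraint $\eta_t \le c_1\sqrt{h_t}$ needs care.

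The core of the induction is the \emph{large $h_t$} regime, where the prescribed step is fully admissible, i.e. $3/a_t \le c_1\sqrt{h_t}$. Here I would simply invoke the recursion $h_{t+1} \le (1 - \eta_t)h_t$ with $\eta_t = 3/a_t$, substitute the inductive bound $h_t \le 9M/a_t^2$, and reduce the desired inequality $(1 - 3/a_t)\,9M/a_t^2 \le 9M/a_{t+1}^2$ to the purely algebraic claim
\[
    (a_t - 3)(a_t + 1)^2 \le a_t^3 .
\]
Expanding gives $a_t^3 - a_t^2 - 5a_t - 3 \le a_t^3$, i.e. $a_t^2 + 5a_t + 3 \ge 0$, which is trivially true; this is the standard $O(1/t^2)$ telescoping that makes the constant $9$ (equivalently the numerator $3$ in the step-size) work, and it uses $a_t \ge 3$ to keep $a_t - 3 \ge 0$.

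The step I expect to be the genuine obstacle is guaranteeing admissibility, because the constraint $\eta_t \le c_1\sqrt{h_t}$ demands a \emph{lower} bound on $h_t$, whereas the induction propagates only an upper bound; indeed, if $h_1$ is very small the prescribed $\eta_1 = 3/a_1$ can already exceed $c_1\sqrt{h_1}$. I would dispatch this with a case split on the complementary \emph{small $h_t$} regime $3/a_t > c_1\sqrt{h_t}$. In that regime $h_t < 9/(c_1^2 a_t^2) \le 9M/a_t^2$ (using $M \ge c_1^{-2}$), so the inductive bound already holds, and I would instead apply the recursion with the largest admissible step $\eta_t = c_1\sqrt{h_t}$ (which is $\le 3/a_t \le c_2$, hence admissible), giving $h_{t+1} \le (1 - c_1\sqrt{h_t})h_t$. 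Writing $u := c_1\sqrt{h_t} \in (0, 3/a_t)$, this is $h_{t+1} \le (1-u)u^2/c_1^2$, and it remains to check $(1-u)u^2 \le 9/(a_t+1)^2$ on that interval; since $(1-u)u^2$ peaks at $u = 2/3$ with value $4/27$, I would maximize over the interval by comparing the right-endpoint value $9(a_t-3)/a_t^3$ (whose required bound again reduces to $(a_t-3)(a_t+1)^2 \le a_t^3$, relevant when $a_t \ge 4.5$) against the interior value $4/27$ (for which $4/27 \le 9/(a_t+1)^2$ amounts to $(a_t+1)^2 \le 243/4$ and is the binding comparison only when $a_t < 4.5$, where it holds comfortably), yielding $h_{t+1} \le 9/(c_1^2 a_{t+1}^2) \le 9M/a_{t+1}^2$. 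Combining the two regimes closes the induction. A cleaner alternative, if one is willing to read the lemma as using $\eta_t = \min\{3/a_t,\, c_1\sqrt{h_t}\}$ throughout, is admissible by construction and makes the two cases above precisely the two branches of the minimum.
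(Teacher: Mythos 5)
Your proof is correct, and its skeleton is the same as the paper's: induct on the bound $h_t\le 9M/a_t^2$ with $a_t=t-1+3c_2^{-1}$, observe the $c_2$-constraint never binds since $3/a_t\le 3/a_1=c_2$, and close the inductive step via the cubic inequality $(a_t-3)(a_t+1)^2\le a_t^3$ (the paper writes the equivalent step as $\bigl(\tfrac{t+b+1}{t+b}\bigr)^2\le 1+\tfrac{3}{t+b}$ followed by cancelling $\eta_t=3/(t+b)$). Where you genuinely diverge is the treatment of the regime in which the prescribed step exceeds the admissible ceiling $c_1\sqrt{h_t}$. The paper instead splits on whether $h_t\le a/(t+b+1)^2$ already holds, settles that case by bare monotonicity $h_{t+1}\le h_t$, and verifies admissibility only through the chain $\eta_t\le c_1\sqrt{h_t}\le c_1\sqrt{a}/(t+b)$ --- which bounds $c_1\sqrt{h_t}$ from \emph{above} and so does not by itself certify $\eta_t\le c_1\sqrt{h_t}$; it is the loosest point of the paper's argument. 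You confront that point head-on: in the small-$h_t$ regime you take the largest admissible step $\eta_t=c_1\sqrt{h_t}$ and maximize $(1-u)u^2$ over $u\in(0,3/a_t)$, splitting at $a_t=4.5$ between the endpoint value (which reduces to the same cubic) and the interior peak $4/27$. This buys a fully rigorous inductive step at the cost of a short extra computation; note that both your argument and the paper's implicitly require reading ``setting $\eta_t=3/a_t$'' as ``setting $\eta_t=\min\{3/a_t,\,c_1\sqrt{h_t}\}$'' (otherwise the hypothesis gives no information about $h_{t+1}$ when the prescribed step is inadmissible), a reinterpretation you correctly flag and which is exactly how the lemma is invoked in the proof of Theorem~\ref{T:StrongSet}.
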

\begin{proof}
We prove via induction that for suitably chosen positive constants $a$ and $b$ that $h_{t} \leq a/\left(t + b\right)^{2}$ for all $t \geq 1$.

Fix some iteration $t \geq 1$ and suppose the claim holds for $h_{t}$. We consider now two cases. First, if $h_{t} \leq a/\left(t + b + 1\right)^{2}$, then, since by the recursion in the lemma it holds that $h_{t + 1} \leq h_{t}$, the claim clearly holds in this case for $h_{t + 1}$.

For the second case, we assume $h_{t} \geq a/\left(t + b + 1\right)^{2}$. Using this assumption, the recursion, and the induction hypothesis, we have that 
\begin{align*}
	h_{t + 1} & \leq h_{t} - \eta_{t}h_{t} \\
	& \leq \frac{a}{\left(t + b\right)^{2}} - \eta_{t}\frac{a}{\left(t + b + 1\right)^{2}} \\
	& = \frac{a}{\left(t + b + 1\right)^{2}}\left(\left(\frac{t + b + 1}{t + b}\right)^{2} - \eta_{t}\right) \\
	& \leq \frac{a}{\left(t + b + 1\right)^{2}}\left({1 + \frac{3}{t + b} - \eta_{t}}\right).
\end{align*}
Thus, for $\eta_{t} = 3/\left(t + b\right)$, the induction clearly holds.

Since, for the recursion to hold, it is also required that $\eta_{t} \leq \min\{ c_{1}\sqrt{h_{t}} , c_{2} \} \leq \min\{ c_{1}\sqrt{a}/\left(t + b\right) , c_{2} \}$, this brings us to the following conditions on $a$ and $b$ which should be valid for all $t \geq 1$:
\begin{equation*}
	\frac{3}{t + b} \leq c_{2} \qquad \mbox{and} \qquad \frac{3}{t + b} \leq \frac{c_{1}\sqrt{a}}{t + b}.
\end{equation*}
Thus, we get the requirements $b \geq 3c_{2}^{-1} - 1$ and $a \geq 9c_{1}^{-2}$.

Finally, since for the base case $t = 1$ it needs to hold that $h_{1} \leq a/\left(b + 1\right)^{2}$, we can choose $b = 3c_{2}^{-1} - 1$ and $a = 9\max\{ c_{1}^{-2} , c_{2}^{-2}h_{1} \}$, which guarantee that.
\end{proof}

\bibliographystyle{plain}
\bibliography{bib}

\end{document}